\documentclass{article} 
\usepackage{iclr2027_conference,times}
\usepackage{caption}
\usepackage{amsmath,amssymb,amsthm}
\usepackage{graphicx}
\usepackage{booktabs}
\usepackage{microtype}
\usepackage{xcolor}
\usepackage{url}
\usepackage[colorlinks=true,linkcolor=blue!60!black,citecolor=blue!60!black,urlcolor=blue!60!black]{hyperref}
\usepackage{comment}
\usepackage{subcaption}
\newcommand{\J}{J}                       
\newcommand{\cell}{c}                    
\newcommand{\dc}{d(\cell)}               
\newcommand{\Qc}{Q(\cell,a)}             
\newcommand{\pol}{\pi}                   
\newcommand{\code}{\sigma}               
\newcommand{\temp}{\tau}                 
\newcommand{\invtemp}{\beta}             
\newcommand{\betaent}{\tau}              

\newcommand{\Ncells}{N}
\newcommand{\Fmf}{F_{\mathrm{var}}}
\newcommand{\Ftrue}{F}

\newcommand{\Amono}{\mathcal{T}}
\newcommand{\alphabet}{\mathcal{A}}

\newcommand{\nr}[1]{\textcolor{blue}{[NR: #1]}}
\newcommand{\zr}[1]{\textcolor{purple}{[ZR: #1]}}
\newcommand{\q}[1]{\textcolor{red}{[Question: #1]}}

\newtheorem{theorem}{Theorem}[section]
\newtheorem{lemma}[theorem]{Lemma}

\newtheorem{corollary}[theorem]{Corollary}
\newtheorem{statement}{Statement}[section]
\usepackage{enumitem}
\theoremstyle{definition}
\newtheorem{definition}[theorem]{Definition}

\theoremstyle{remark}
\newtheorem{remark}[theorem]{Remark}
\newcommand{\A}{\mathcal{A}}
\newcommand{\X}{\mathcal{X}}
\newcommand{\C}{\mathcal{C}}
\newcommand{\R}{\mathbb{R}}
\newcommand{\Pol}{\Pi}
\newcommand{\simp}{\Delta}
\newcommand{\supp}{\operatorname{supp}}

\newcommand{\step}{\operatorname{st}}
\newcommand{\Nb}{\mathcal{O}}

\usepackage[table]{xcolor}
 \iclrfinalcopy

\title{RLVR landscapes for iterated multiplications can be benign: Insights from spin-glass theory }

\author{Noa Rubin \thanks{Corresponding author.} \\ Racah Institute of Physics \\ The Hebrew University
of Jerusalem\\
\texttt{noa.rubin@mail.huji.ac.il} \\ 
\And 
Zohar Ringel \\
Racah Institute of Physics\\
The Hebrew University of Jerusalem\\
\texttt{zohar.ringel@mail.huji.ac.il}
}

\begin{document}
\maketitle
\lhead{}

\begin{abstract}
Despite the importance of reinforcement learning with verifiable rewards (RLVR), the extent to which it can learn new reasoning capabilities remains debated. Here we study the optimization landscape of RLVR on algorithmic tasks, such as iterated group and quasigroup multiplication. To this end, we map entropy-regularized RLVR over myopic tabular policies onto an energy-based (spin-glass) model over deterministic policies. This mapping upper-bounds what RLVR can achieve, and lets us rigorously characterize the landscape in this tabular setting. We show, both theoretically and experimentally, that for a wide class of models and tasks with uncorrelated inputs, this landscape is benign, containing no local minima that could trap RLVR training. Rather, the practical difficulty of these tasks appears to stem, at least in part, from issues such as diffusive barriers and gradient-estimation error in traversing the landscape. These are genuine obstacles that can prevent a solution from being found, but they are distinct from the landscape itself being rugged. We show that these obstacles can often be mitigated through the choice of entropy regulator. Consistent with this theory, we find that a transformer trained from scratch, using only last-token rewards, successfully learns an algorithmic chain of thought for iterated non-Abelian group multiplications.
\end{abstract}

\section{Introduction}
\label{sec:intro}
Reinforcement learning with verifiable rewards (RLVR), policy-gradient training against an automatically checkable success signal, is the main tool behind recent improvements in the reasoning of large language models
\citep{deepseekr1,dapo}. Whether it produces genuinely new reasoning, or
mainly reweights behavior already present in the pretrained model, is
debated \citep{yue2025,wu2026invisibleleashrlvrescape,wen2025,tsilivis2025reinforcementlearningnexttokenprediction}. 
Behind this debate sit open optimisation questions about RLVR itself. Due to the credit assignment problem, intermediate tokens receive little to no direct signal, and sampled
gradients are noisy \citep{dapo,tao2025hybridreinforcementrewardsparse,han2026nonuniformnoisetosignalratioreinforce,li2026optimaltokenbaselinevariance}.
Such hurdles can be, in principle,  addressed by larger batches, dynamic sampling, and
difficulty curricula \citep{dapo,jiang2025vcrlvariancebasedcurriculumreinforcement}, but these are not generic cures. Curricula, for instance, are subject to primacy bias and shortcut learning
\citep{nikishin2022,liu2023shortcuts}. Indeed, practitioners routinely describe runs as
``trapped in a local optimum'' \citep{ragen,jin2026revisitingentropyreinforcementlearning}. 

This brings to question the complexity and ruggedness of RLVR landscapes. Such analyses of
reinforcement learning exist for specialized settings (see related literature), which show that optimizing policies
with bounded memory is NP-hard in the worst case
\citep{littman1994,mundhenk2000,vlassis2012}. However, worst-case statements say
little about typical instances. For random constraint satisfaction problems, analysis of the typical case was pushed forward by applying statistical mechanics approaches based on spin-glass theory. For instance, spin-glass theory traced typical-case algorithmic hardness to the
geometry of the solution landscape, which above certain thresholds breaks
into many separated clusters and local minima
\citep{mpz2002,krzakala2007,zdeborova2016}, inspiring rigorous mathematical approaches \citep{gamarnik2021}. Building a similar bridge between spin-glass theory and RLVR is thus a desirable goal.

As a toy setting to approach this goal, we focus on a simple class of algorithmic tasks consisting of iterated quasigroup and group multiplication \citep{liu2023shortcuts,merrill2024illusion,li2024serial,grazzi2025statetracking,
marchetti2026sequential}. This choice is partially motivated by a simplistic reading of what pretraining provides: a set of
skills \citep{michaud2024quantizationmodelneuralscaling} the model can execute individually, together with the problem of
composing them in the right order. Taking the skills to be the elements of
a finite group or quasigroup and composition to be its operation,
``learning to reason'' becomes evaluating a product of $T$ elements
presented in sequence, with one intermediate token emitted per element and
reward only if the final token is the correct product. 

As model classes, we use myopic tabular policies (myopic in the sense of \cite{deweese2026kstep}) that read a bounded window of the
input together with their own recent tokens. While such myopic bias is obviously a crude approximation to that of a transformer, the workhorse of RLVR applications in reasoning, it has several merits: It fleshes out culprits in the RLVR loss itself rather than in the policy parametrization. It exhibits some of the qualitative features seen in transformer traps for our tasks, while permitting essentially exact numerics. 

In this work, we provide a novel spin-glass perspective on RLVR, which allows an in-depth analysis of the above task with some conclusions carrying over to transformers. In particular, we establish the following results

\textbf{A mapping between RLVR and spin models}
(Sec. ~\ref{sec:dictionary}). 
We build a dictionary between RLVR and
spin-glass models, yielding three results: every local maximum of the
reward is continuously connected, without crossing a
loss barrier, to a deterministic policy, referred to as a {\it code} or {\it spin-configuration} (\S\ref{sec:codes_reward}); with an
entropy regulator, the RLVR loss is provably lower-bounded by the free
energy of a corresponding spin-glass energy-based model
(\S\ref{sec:free_energy_bound}); and steady states of local dynamics on the spin-glass model's trapped
configurations approximately coincide with  RLVR fixed points
(\S\ref{sec:pure_states_traps}).

\textbf{Spin-glass model for parity task is solvable} (Sec. ~\ref{sec:parity}). We obtain closed expressions for the free energy of the spin-glass model associated with parity at any temperature and depth ($T$) and find a first-order phase transition at large $T$, between strictly random and correct codes.

\textbf{Reward landscapes can be benign} (Sec. ~\ref{sec:escape}). We leverage the spin-glass mapping to prove that for simple untied models the reward has no suboptimal local maxima. We show that a simple Monte Carlo sampler of the spin-glass models solves all untied models and tasks without a curriculum. With weight tying and
input correlations, the proof breaks down, and indeed we find local maxima.

\textbf{The RLVR--spin-glass gap and regulator choices}
(Sec.~\ref{sec:further_results}). We show that RLVR optimization (via exact local policy updates) fails to optimize untied models, while as aforementioned, Monte-Carlo on spin-glass models succeeds. This gap is due to the former's inability to diffuse over equal reward code landscapes. We show that this gap can be closed with local policy resets to random deterministic policies or strong regulators.

\textbf{Strongly regulated transformers solve the task}
(Sec.~\ref{sec:transformer}). We show numerically that transformers can easily express and interpolate between untied tabular policies with narrow windows and hence share some of their benign landscape features. 
 We further show transformer trained from scratch on the
same task fail when left weakly or moderately regulated, but often yield perfect learning of the task when strongly regulated and augmented by a curriculum to circumvent signal-to-noise issues.  

\paragraph{Additional related literature.}
Landscape analyses of
reinforcement learning exist for specialized settings: gradient-domination results for complete parameterisations
\citep{agarwal2019,bhandari2019}, the linear-quadratic regulator
\citep{fazel2018}, convergence for softmax tabular policies \citep{mei2020},
and a study of what entropy regularisation does to small exact-gradient
problems \citep{ahmed2019}. Our work is close in spirit but distinct from the maximum-entropy formulation of control, in which
an entropy-regularised objective is read as inference over a Boltzmann
measure on \emph{trajectories}
\citep{ziebart2008,levine2018,haarnoja2018}: there the sites are the time
steps of a rollout, whereas our ensemble is over policies and the sites
are table cells. On the physics side, a glassy phase was demonstrated for
an optimal-control landscape \citep{day2019glassy}, spin-glass language
has been applied by analogy to supervised loss surfaces
\citep{choromanska2015loss}, and reinforcement learning has been used as a
solver for spin-glass ground states \citep{dirac}. The trapped
configurations we certify have antecedents as partial pooling equilibria
in signalling games \citep{lewis1969,crawford1982,huttegger2010,skyrms2010},
as miscoordination of independent learners in common-payoff games
\citep{panait2005,matignon2012}, and as myopic optima of restricted policy
classes \citep{deweese2026kstep}. Entropy collapse in RLVR and entropy-management
schemes are active empirical topics \citep{cui2025entropy,wang2025eighty,paec};
the diagnostics here differ in being derived from measured landscape
structure rather than fitted to online statistics. 
The stalls of plain RLVR and their repair by
re-randomisation are the multi-step counterparts of partial pooling and its
destabilisation by mutation 
\citep{zhang2004umda} and are kept from premature fixation by margins acting as mutation.


\section{Setting: Task and Models}
\label{sec:setup}
Let $\alphabet$ be a finite alphabet of size $q$ carrying a binary operation
$B:\alphabet\times\alphabet\to\alphabet$ whose multiplication table is a
Latin square, i.e.\ every row and every column is a permutation of
$\alphabet$. Such a structure is a \emph{quasigroup}; a group is the special
case in which $B$ is also associative. The input is a sequence
$x_1\ldots x_T$ drawn uniformly from $\alphabet^T$ (in some expereimental setups we break this assumption and draw correlated inputs), and the target is the
left fold
\begin{equation}
s_1 = x_1,\qquad s_t = B[s_{t-1},x_t],\qquad \text{target} = s_T .
\label{eq:fold}
\end{equation}
We refer to $s_t$ as the \emph{state} of the task at step
$t$. To solve this task, we consider models that sequentially generate an output sequence $y = (y_1, \ldots, y_T)$ conditioned on the input $x$. At each token index $t$, the next token $y_t$ is drawn from a distribution, or policy, $\pi$ that can depend on $x,y_{<t}, t$. An optimal policy is one that perfectly outputs the target, $y_T = s_T$, regardless of intermediate tokens. In this section, we define two model classes for this task: tabular policies trained with RLVR (Section~\ref{sec:rlvr_setup}), and an energy-based model for the distribution of deterministic policies (Section~\ref{sec:codes}).

\subsection{RLVR Optimized Tabular Policies}
\label{sec:rlvr_setup}
In this class of models, the policy at each timestep relies on a localized context variable
\begin{equation}
 c_t (x,y_{<t}) = (x_{t-n_p}\ldots x_{t+n_f}, y_{t-n_y}\ldots y_{t-1}, t)
\end{equation}
which encapsulates a sliding window of the input, a bounded window of previously emitted tokens, and the current time step (ie token index) $t$. For the boundary cases, we define for every offset $k\in\{-n_p,\dots,n_f\}$,
$x_{t+k}=\text{SEP}$ whenever  $t+k<1$ or $t+k>T$,
and for every $k\in\{1,\dots,n_y\}$,
$y_{t-k}=\text{SEP}$ whenever $t-k<1$. For the boundary cases, we define $x_{t'} = \text{SEP}$ when $t'<1$ or $t'>T$, and similarly $y_{t'}=\text{SEP}$ for $t'<1$. In the following, we take $n_y=1$, and denote the window of each model by $(n_p,n_f)$. We refer to each unique configuration of this context as a cell. For each cell $c$, the policy assigns a discrete probability distribution $\pi(y_t \mid c)$ over the $q$ possible vocabulary tokens. Consequently, the space of all possible policies is simply a product of probability simplices, one per cell. We refer to a policy as tied if the policy is not an explicit function of the token index, so that for all $t,t'$, if $c_t,c_{t'}$ differ only in the token index, then $\pi_{c_t}=\pi_{c_{t'}}$. Note that this implies that untied policies visit each at most once. 
This policy is optimized with RLVR, where the model receives reward $=1$ if $y_T$ equals $s_T$ and $0$ otherwise. Crucially, intermediate tokens are not supervised.  We denote by
$J(\pi)$ the expected reward over the policy's outputs and the input data. This policy is trained by minimizing the following objective
\begin{equation}
    \mathcal{L}(\pi)= - J(\pi)- \beta^{-1} \sum_c H (\pi_c),
\end{equation}
where the second term is an entropy regularizer and we refer to the coefficient $\tau = \beta^{-1}$ as the temperature.  The loss itself is optimized via coordinate-ascent variational inference (CAVI) \citep{blei2017variational}, where we optimize each cell $c$ while fixing the other cells. In this model, we can explicitly define for untied policies the optimal policy for each cell conditioned on all the other ones,  which is given by
\begin{equation}
\label{eq:cavi}
\pi_c(a) \propto \exp\big(d(c)\,Q(c,a)/\tau\big), 
\end{equation}
where $d(c)$ is the probability that rollouts consult cell $c$, and $Q(c,a)$ is the expected reward, with respect to the policies of all other cells, conditioned on action $a$ being taken at cell $c$. See App. \ref{app:dQ} for the CAVI update rules on tied policies. A forward recursion over all input combinations gives visitation probabilities $\dc$, a backward recursion gives the action values $\Qc$, analogous to the algorithm described in \citep{rabiner1989}. Following the policy-gradient theorem \citep{sutton2000}, we then obtain $\J$ and its exact gradient in closed form, with essentially zero estimation error. For full details, see App. \ref{app:dQ}.

\subsection{Statistical Spin-Glass Models of Deterministic Policies}
\label{sec:codes}
We now turn to an a priori different kind of inference model. We study an energy model over policies, plausibly the simplest one that captures the spirit of what RLVR training favors- determinism and high reward. Thus, this distribution is supported on deterministic policies, each of which assigns a single action to each cell. We call such a function a \emph{code}, and write $\sigma:\mathcal{C}\to\mathcal{A}$, with $\sigma_c$ denoting the token assigned to cell $c$ (see Figure~\ref{fig:joint_dictionary}II for a visualization). To favor high reward, we consider a Gibbs distribution with energy (cost) given by $E(\sigma) = -J(\sigma)$, assigning each code probability proportional to $e^{-\beta E(\sigma)}$ at inverse temperature $\beta$,
\begin{equation}
Z(\beta) = \sum_{\sigma \in \mathcal{A}^{\mathcal{C}}} e^{-\beta E(\sigma)}, \qquad p_{\text{spin}}(\sigma) = e^{-\beta E(\sigma)}/Z(\beta).
\label{eq:spinmodel}
\end{equation}
In physics terms, this is a system of $N=|\mathcal{C}|$ $q$-state (Potts) spins with Boltzmann distribution $p(\sigma) \propto e^{-\beta E(\sigma)}$. Here, $J(\sigma)$ is the same expected reward as defined in \ref{sec:rlvr_setup}, evaluated at the deterministic policy $\sigma$, which is explicitly given by
\begin{equation}
J(\sigma) \;=\; \mathbb{E}_{x}\big[ \sum_{y\in\mathcal{A}^{T}}
\delta_{y_T,\,s_T(x)} \prod_{t=1}^{T}
\delta_{\sigma_{\,c_t(x,\,y_{<t})},\;y_t} \big],
\label{eq:trajexp}
\end{equation}
Note that this is a non-local, distributed interaction reminiscent of, but distinct from, a canonical spin-glass system. Both share a discrete phase space with non-local interactions; the difference is that our model's interactions are complex (quasirandom) and contain a gauge symmetry rather than genuinely random, as in the spin-glass setting. See Fig. (\ref{fig:dictionary}) for a summary of the RL/physics terminology correspondence, and App. \ref{app:gauge} for a more detailed comparison to classic spin models. 

This model is part of a well-studied class of physical systems, bringing with it a rich theoretical toolbox for analyzing such distributions \citep{mezardmontanari}. In some cases, the model can be solved analytically (see Sec. \ref{sec:parity}), and more generally, since codes take values in a finite space which can in principle be enumerated over to compute various aspects of the system. Alongside these theoretical tools, Monte Carlo (MC) methods allow us to sample the distribution directly \citep{newman1999monte}.

Notwithstanding, we note that this particular choice of energy function is itself just one choice among many. Moreover, the window and tying restrictions of Section~\ref{sec:rlvr_setup} are special cases of a more general family. Writing $E_{\text{win}}$ and $E_{\text{tie}}$ for penalties that respectively discourage codes from treating histories differently within a window, or across tied generation steps, we consider
\begin{equation}
E_{\mathrm{tot}}(\sigma) \;=\; -J(\sigma)
\;+\; g_{\mathrm{win}}\,E_{\mathrm{win}}(\sigma)
\;+\; g_{\mathrm{tie}}\,E_{\mathrm{tie}}(\sigma),
\label{eq:family}
\end{equation}
recovering the complete parameterisation at $g_{\text{win}} = g_{\text{tie}} = 0$, and the untied and tied windowed classes as $g_{\text{win}}, g_{\text{tie}} \to \infty$. We note in passing that a transformer's implicit bias or pretraining could similarly be represented as a penalty term of this kind.

\begin{figure}[t]
    \centering
    \renewcommand{\thesubfigure}{\Roman{subfigure}}
    
    \begin{subfigure}[c]{0.52\textwidth}
        \centering\small
        \begin{tabular}{p{3.8cm} l} 
        \toprule
        Reinforcement Learning &  Physics term \\
        \midrule
        deterministic policy (code)  & spin configuration $\code$ \\
        expected reward $\J$ &  minus the energy \\
        one table entry (cell)  & one $q$-state (Potts) spin \\
        stochastic policy & variational product state \\
        entropy coefficient $\temp = \beta^{-1}$  & temperature \\
        $\dc\,\Qc$  & local field at a site \\
        relabelling of CoT tokens  & gauge transformation \\
        set of near-trap configs  & pure state \\
        \bottomrule
        \end{tabular}
        \caption{Mapping dictionary}
        \label{fig:dictionary}
    \end{subfigure}\hfill
    \begin{subfigure}[c]{0.45\textwidth}
        \centering
        \includegraphics[width=\linewidth]{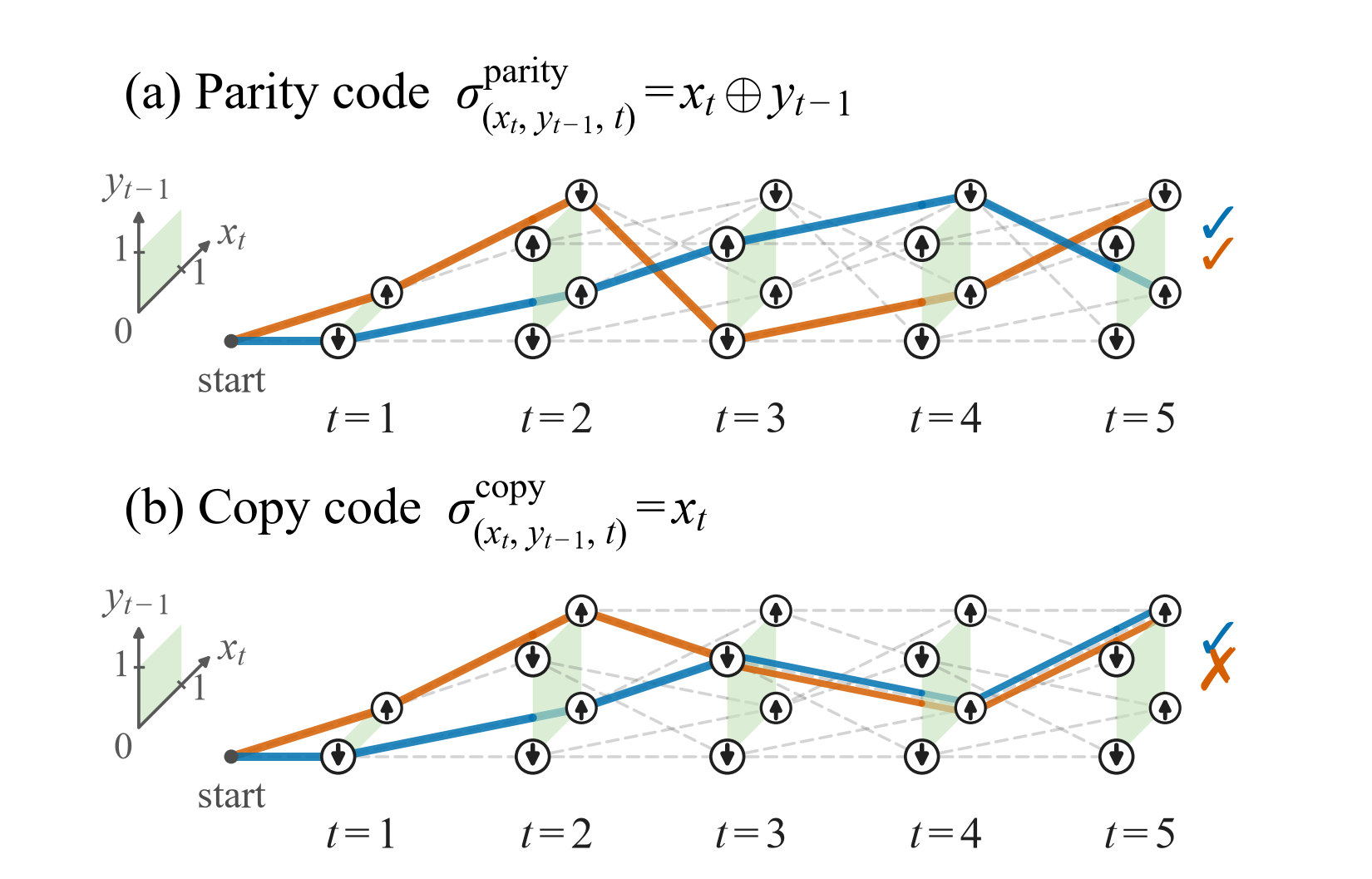} 
        \caption{Visualized mapping}
        \label{fig:sub_image}
    \end{subfigure}
    
    \vspace{-0.5em} 
    
    \caption{The RLVR and spin-glass mapping. (\textbf{I}) Relating concepts from reinforcement learning to spin-glass models. (\textbf{II}) Two codes on the $(0,0)$ window (i.e.\ $\cell_t = (x_t,y_{t-1})$), for $q=2$ and $T=5$. Each circle is a cell $(x_t, y_{t-1}, t)$; the arrow inside it is the code's output at that cell, up for token 1 and down for token 0. Dashed gray edges trace the code's action on all input sequences. Two inputs differing only in $x_1$ are highlighted: {\color[HTML]{0072B2}$x=[0,1,0,1,1]$} and {\color[HTML]{D55E00}$x=[1,1,0,1,1]$}.Both trajectories of the parity code (a) converge to the correct answer, whereas the copy code (b) succeeds on only one of the two, the same success rate as chance.}
    \label{fig:joint_dictionary}
\end{figure}

\section{Mapping the spin-glass model to RLVR}
\label{sec:dictionary}

Up to this point, we have made no explicit connection between the RLVR objective and our energy-based spin model. In this section we make this comparison more rigorous. We show that the local maxima of the reward as a function of policies are continuously connected to codes with the same reward; we then use the spin model to obtain a bound on the RLVR loss, which is important for regulator effects at finite temperatures; and we show that the Monte Carlo sampler's traps in the spin model coincide with local minima of RLVR training. 

\subsection{Spin-Glass Codes as Reward Maximizers}
\label{sec:codes_reward}
Our first result is given by the following statement
\begin{statement}
\label{thm:neutral_paths}
We consider the space of policies as a product of simplices embedded in $\mathbb{R}^{q\times |\mathcal{C}|}$. Let $\pi^*$ be an untied policy such that there exists an open set around it where $\pi^{*}$ obtains a maximal reward, then it can be connected by a continuous path of fixed reward to a deterministic policy $\sigma^*$ such that $J(\sigma^*) = J(\pi^*)$.
\end{statement}
We show this for a simplified case here; the full proof follows a similar structure (App.~\ref{app:neutral_path}). Let $\pi^*$ locally maximize $J$, and let $c$ be a cell that the code consults with at which $\pi^*_c$ is non-deterministic. For simplicity, suppose only two actions $a,b$ are possible at this cell, with $\pi_c^*(a) = s$, $\pi_c^*(b) = 1-s$. By multilinearity of $J$, we have, up to constants, $J \propto s\,Q(c,a) + (1-s)\,Q(c,b)$. Since $\pi^*$ is a local maximum of the reward, we require $Q(c,a) = Q(c,b)$, so $J$ is constant in $s$, and we are free to assign any probability we wish to $a$ and $b$. The edge cases $s \in \{0,1\}$ are codes. 
The proof in App.~\ref{app:neutral_path} extends this approach to policies with larger support and more cells.

\subsection{Free Energy as an RLVR Loss Bound}
\label{sec:free_energy_bound}
A tabular RLVR policy is, by construction, factorized across cells: it assigns an independent distribution $\pi_c$ to each cell $c$. Viewing a cell as a spin site, this is exactly the structure of a mean-field ansatz in physics, a distribution that assumes no correlations between sites. This ansatz is typically correct when each spin interacts with many other spins, as is the case for our model. Motivated by this observation, we consider the fully factorized distribution over spins $\hat{\pi} = \bigotimes_c \hat{\pi}_c$ (so that for codes drawn from this distribution, we have $\hat{\pi}_c(a) = \mathbb{E}_{\sigma \sim \hat \pi }[\delta_{\sigma_c, a}]$). This variational distribution can be used  to bound the free energy of the true distribution, $F_{\text{spin}}=-\beta^{-1}\log Z$ with the Gibbs-Bogoliubov inequality \citep{mezardmontanari}:
\begin{equation}
F_{\text{var}}[\hat{\pi}] \;=\; -\tilde{J}(\hat{\pi}) - \beta^{-1}\sum_c H(\hat{\pi}_c) \;\ge\; F_{\text{spin}} \qquad
\tilde J(\hat \pi) = \mathbb{E}_{\sigma\sim \hat \pi}\big[J(\sigma)\big].
\label{eq:variational}
\end{equation}
We note that upon substituting $\tilde{J}$ with $J$, we obtain that $F_{\text{var}}[\pi]= \mathcal{L}(\pi)$ where $\pi$ is a tabular policy. However, $\tilde{J}$ is not, in general, the same as the RLVR reward $J$. In $\tilde{J}$, every cell's action is sampled from $\hat{\pi}$ independently, whereas in $J$, actions are generated sequentially via a rollout. The two rewards coincide whenever no non-deterministic cell is visited more than once during a rollout. This holds exactly for untied tabular models, and approximately for tied tabular models and near zero temperature, where such revisits are rare and cells are largely deterministic, respectively. See Appendix~\ref{app:correspondence} for the full derivation. 
When these rewards coincide, the RLVR loss of any tabular policy $\pi$ is equal to the variational free energy, so that
\begin{equation}
\mathcal{L}(\pi) = F_{\text{var}}[\pi] \;\geq\; F_{\text{spin}}.
\label{eq:identity}
\end{equation}

Thus, the free energy of the true spin model distribution sets a lower bound, which is nontrivial at finite $\beta$,  on what any tabular policy trained with RLVR can achieve (i.e. policies that solve Eq. \ref{eq:cavi}). 

The tightness of this bound depends on how well $\hat \pi$ approximates $p_{\text{spin}}$, namely how good the mean-field approximation is for the configuration encountered in Monte Carlo sampling. For some models, as is demonstrated in Section \ref{sec:parity}, $F$ can be solved explicitly, giving a concrete, quantitative bound to the RLVR loss. More generally, the discrete nature of the system can often be advantageous, making the configuration space more likely to be enumerable, and so more tractable to solve outright, offering a route to bounding the loss. We comment that multiplying the bound in Eq.~\eqref{eq:identity} by $\beta$ reveals a machine-learning analogue: maximizing $\beta \tilde{J}(\hat{\pi}) + \sum_c H(\hat{\pi}_c)$ is precisely maximizing the evidence lower bound (ELBO), $\beta \tilde{J}(\hat{\pi}) + \sum_c H(\hat{\pi}_c) \leq \log Z$.  

\subsection{Spin-Glass Pure States as Training Traps}
\label{sec:pure_states_traps}
In spin-glass theory, it is well established that Monte Carlo sampling tends to become trapped in \emph{pure states}: configurations in which $\sigma_c$ and $\sigma_{c'}$ are approximately uncorrelated for $c \neq c'$ \citep{mezardmontanari}. The full distribution over spins can then be understood as a mixture over such pure states. Since these pure states are, by definition, fully factorized across cells, they lie within the same variational family $\hat{\pi} = \bigotimes_c \hat{\pi}_c$ considered above. Among all such factorized distributions, the most likely pure states are precisely those that locally minimize the variational free energy $F_{\text{var}}$. Thus, we expect that when Monte Carlo converges to a steady state, the corresponding distribution would have weakly correlated spins, and its marginals per cell would be approximate solutions to Eq. \ref{eq:cavi}. In Fig. ~\ref{fig:trapsfixed} we demonstrate empirically that this is indeed the case for the trapped models that we studied.

\begin{figure}[t]
\centering
\includegraphics[width=\linewidth]{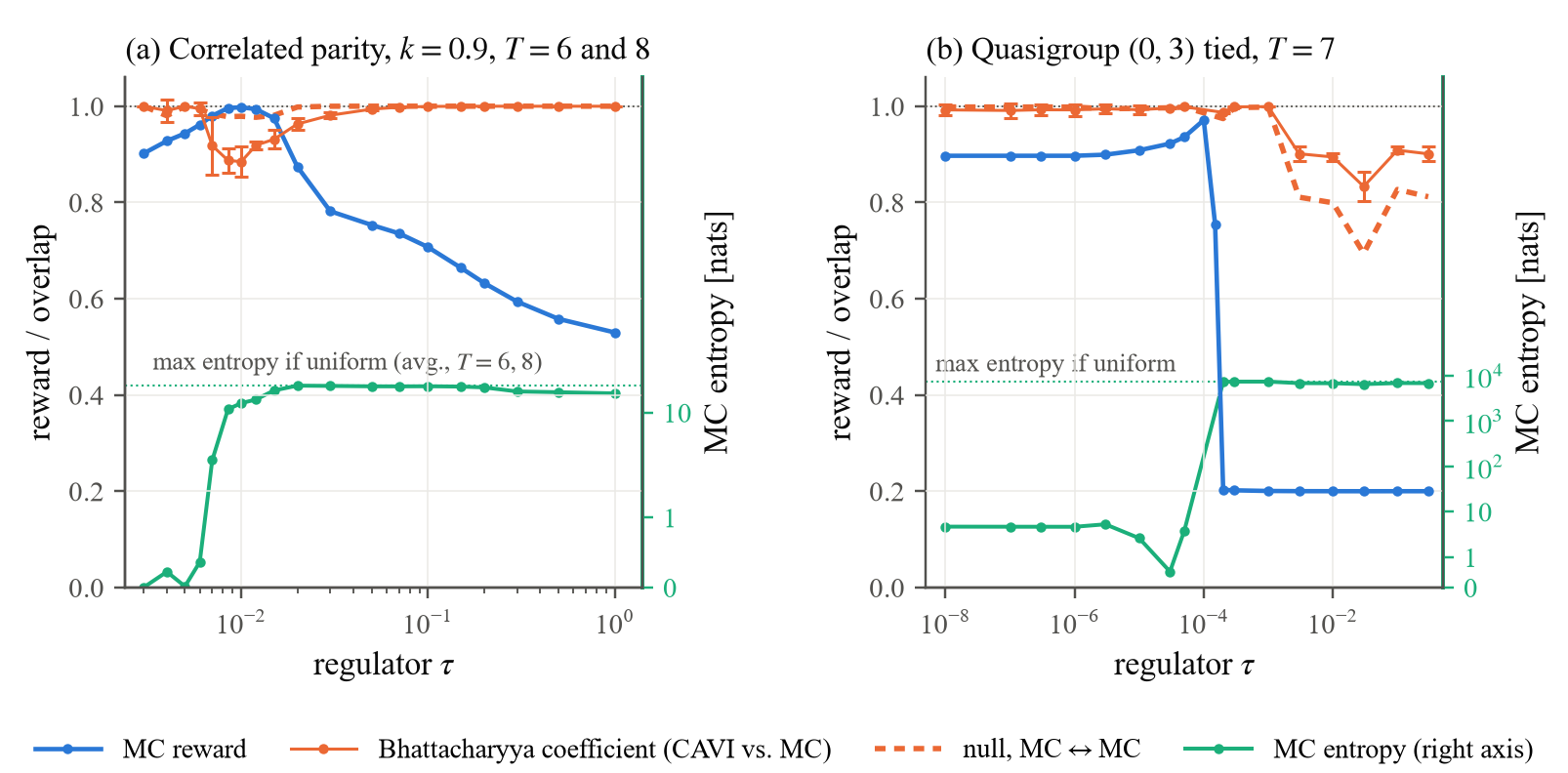}
\caption{MC stationary distributions against the CAVI endpoints seeded at them, as a function of the regulator $\tau$: (a) the 27 correlated-input parity traps, (b) the twelve tied quasigroup traps. Left axis: MC reward, averaged over all traps and replicas; Bhattacharyya coefficient between the CAVI endpoint and the MC histogram, mean $\pm$ standard deviation over the finite-temperature traps; and its null. Right axis: the entropy $\sum_c H(\pi_c)$ of $\pi^{\mathrm{MC}}$, averaged over the same traps; dotted lines mark its value when every cell is uniform. Open markers: fewer than half of the traps enter the average at that $\tau$; missing points: none does.}
\label{fig:trapsfixed}
\end{figure}

\section{Some exact results on untied models}
\label{sec:exact_results_spin}
\subsection{Untied classes hold no defended minima}
\label{sec:escape}

A key empirical result of this paper is that, for a wide class of untied models, namely untied $(0,0),(2,1),(1,1),(3,0)$ window models with uncorrelated inputs, the landscape has no local minima, as evidenced by Monte Carlo optimization at vanishing temperature consistently sampling codes with $J=1$. See Tab. \ref{tab:shortfall} for full results. Here we support this with the following analytical result

\begin{theorem}[local search reaches optimal reward]
\label{thm:escape}
Consider the untied $(0,0)$ window (i.e.\ $\cell_t = (x_t,y_{t-1})$) class on a quasigroup fold with inputs drawn
uniformly, and local dynamics that change one cell at a time and
accept any change that does not lower $\J$. From every code there is a
finite chain of accepted changes ending at a code with $\J=1$. 

\end{theorem}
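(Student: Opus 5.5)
The plan is a forward induction over the time steps. At the end of step $t$ the code should satisfy $y_{t'}=s_{t'}$ for all $t'\le t$ (up to a gauge relabelling of tokens, which I will treat as free). Every move I use will be a single-cell change that does not lower $\J$. Once the invariant holds at $t=T$ we have $\J=1$.

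The basic tool is a value function. Because the context is $\cell_t=(x_t,y_{t-1},t)$ and the future inputs $x_{t+1},\dots,x_T$ are uniform and independent of the past, the success probability conditioned on the history depends only on the pair $(y_t,s_t)$. Call it $g_t(y,s)$; it is determined by the cells at times $>t$. For any time $t$ this gives
\begin{equation}
\J=\sum_{y,s}P(y_t=y,\,s_t=s)\,g_t(y,s).
\end{equation}
Now change a single cell $\cell=(x,y',t)$ from output $a$ to output $b$. Since the policy is untied, the later cells are unaffected, so
\begin{equation}
\Delta\J=\sum_s P(x_t=x,\,y_{t-1}=y',\,s_t=s)\,\bigl(g_t(b,s)-g_t(a,s)\bigr).
\end{equation}
Cells with $d(\cell)=0$ therefore change for free.

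Suppose layers $<t$ are already correct. Then each visited cell at time $t$ carries a single state $s=B[y',x]$, and the change rule reduces to: $\Delta\J\ge 0$ if and only if $g_t(b,s)\ge g_t(a,s)$.

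The induction step at time $t$ has two moves.

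First, greedily reassign each visited cell to an action in $\arg\max_b g_t(b,s)$. This gives a code in which each state $s$ is sent to a token that is best for it.

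Second, separate merged states. Suppose $g_t$ makes two states $s\neq s'$ share a token $y$, while some other token $y''$ is unused at time $t$. I rewrite the cells $(x_{t+1},y'',t+1)$ to copy the cells $(x_{t+1},y,t+1)$. Those cells are unvisited, so each rewrite is free. After the copy, $g_t(y'',\cdot)=g_t(y,\cdot)$, so the greedy step can move $s'$ from $y$ to $y''$ at zero cost.

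Repeating this, the states at time $t$ end up injectively labelled. After a gauge relabelling, which is also a sequence of such free copy-and-switch moves, this gives $y_t=s_t$.

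An unused token always exists while two states share one, by pigeonhole, since there are $q$ states and $q$ tokens. The final layer $T$ is the base case of the same argument: there the greedy move alone makes $y_T=s_T$ once $y_{T-1}=s_{T-1}$, because $\Qc$ is then $1$ for the correct action.

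The main obstacle is the copy-and-switch step, and more precisely its side effects. The copy must be done cell by cell at time $t+1$. It is free only because no trajectory visits token $y''$ at time $t$. However, the cells at time $t+1$ read only $(x_{t+1},y_t)$, so the copied behaviour has to be exactly right for every $x_{t+1}$. I also need to check that the switch of $s'$ does not reduce $g$ for trajectories that later reconverge.

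I would first try to show this directly, using the Latin-square property: for each fixed $x$, the map $s\mapsto B[s,x]$ is a bijection, so the splitting at time $t$ can only refine the partition of states at later times and never coarsens it. If this fails, the fallback is a potential-function argument. The count of distinct (token, state) pairs that are distinguishable at the last time they differ increases strictly with each macro-move while $\J$ is non-decreasing, and finiteness of the code space then gives termination.
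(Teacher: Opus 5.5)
Your core argument is correct and is essentially the paper's proof (Appendix~\ref{app:escape}). Your $g_t$ is the paper's $W_t$, and your $\Delta J$ formula is its Statement~1. The sequence ``greedy per state, find an unused token by pigeonhole, prepare the unvisited cells $(\cdot,y'',t+1)$ for free, reroute one of the merged states, move forward'' is its Steps~1--3.

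The only real difference is the preparation step. The paper writes $\arg\max_{a'}W_{t+1}(a',B[s_t,x])$ into each cell $(x,u,t+1)$, so the reroute is weakly improving. Your verbatim copy instead gives $g_t(y'',\cdot)=g_t(y,\cdot)$ identically, so the reroute is exactly neutral. Both work, provided the greedy step picks a single argmax token per state; otherwise one state can occupy two tokens and the pigeonhole count fails.

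Your last paragraph, however, worries about an obstacle that is not there, and two of its claims are false.

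\textbf{The obstacle.} The copy is verbatim and $g_t$ already accounts for everything after step $t$. So the switch of $s'$ changes $J$ by exactly $P(\cdot)\bigl(g_t(y'',s')-g_t(y,s')\bigr)=0$, for reconverging trajectories as much as for any others.

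\textbf{The refinement claim.} What does happen is that $s$ and $s'$ collide again at step $t+1$: they now read identical cells there while $B[s,x]\neq B[s',x]$. So the claim that the split ``only refines and never coarsens'' is false for your construction. It is also unnecessary. Layers $<t$ are untouched, layer $t$ is now injective, and only previously unvisited cells at $t+1$ were edited. Hence the first-collision index strictly increases, and after at most $T$ rounds the greedy move at layer $T$ yields $J=1$. No potential function is needed.

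\textbf{The relabelling.} Drop the relabelling to $y_t=s_t$. The induction only uses a bijection between tokens and states at each collision-free layer (the paper's Statement~2). The relabelling is also not in general reachable by accepted moves, since a bijective layer has no unused token to serve as a buffer. For example, an optimal code whose first layer is a non-identity bijection cannot change any visited cell without lowering $J$.
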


The proof (Appendix~\ref{app:escape}) proceeds in two parts. First, we show that a \emph{collision-free} code, one in which no two histories in different task states are ever assigned the same output token, reaches $J=1$ in a finite number of steps that do not decrease the reward. 
Second, we show how to remove collisions iteratively, without ever decreasing the reward. At a colliding step, every cell is first set to the token that is individually optimal for the true state it leads to. Since there are exactly $q$ possible true states and $q$ possible tokens at this step, any two states left sharing a token after this adjustment leave exactly one token unused. We show how the code can be set to route through the unused token without decreasing the reward. Applying this at each successive colliding step, and then treating the resulting collision-free code by the first part, proves the theorem. 

\textbf{Corollary 4.2} \emph{A non-decreasing reward escape path exists also in policy space when starting from a code.} Proof:  Note that due to multilinearity of $J$, a linear extrapolation between $\pi_c$ and $\pi_c'$, with rewards $J$ and $J'$ respectively, results in a linear interpolation of the reward. Since the theorem relies only on such local policy changes, it implies a piecewise linear extrapolation in policy and reward space. 

\textbf{Corollary 4.3} \emph{A non-decreasing reward escape path exists from any policy.} Proof: If the policy is in a local maximum, it is connected to a code via statement \ref{thm:neutral_paths}. If it is not in a maximum, then, by definition, a path with improving rewards exists.

\subsection{ Exact solution for parity }

\label{sec:parity}
Section \ref{sec:free_energy_bound} showed that the spin model's free energy bounds the RLVR loss, without evaluating that free energy explicitly. This section does so for the simplest member of the task family, the two-element group, $q=2$, whose
fold is the parity of the input bits. This section computes an explicit expression for the partition function for the untied $(0,0)$ window, and predicts the existence of a first-order phase transition with $\beta$.

Following the terminology of reinforcement learning under partial
observability \citep{whitehead1991,mccallum1996}, we say that a code
\emph{aliases} at step $t$ when two histories whose task states $s_t$ differ
produce the same emitted token $y_t$. What makes this the relevant notion is
what the rest of the computation can see: later steps read $y_t$ and the
input symbols still inside their window, but not the symbols that have left
it, so any distinction absent from $y_t$ is absent for good.

Collect the
code's decisions at step $t$ for input symbol $x$ into a $q\times q$ matrix
$[\Sigma_{t,x}]_{y_t,\,y_{t-1}} \;=\; \delta_{\sigma_{(t,x,y_{t-1})},\,y_t}$
which we refer to as a \emph{link matrix}. It has exactly one entry equal to one in each column, so that it is a map
$\alphabet\to\alphabet$. For the (0,0) window, there are two such matrices per $t$: $\Sigma_{t,0}$ and $\Sigma_{t,1}$. Call a step $t$ \emph{dead} if both of its link matrices ($\Sigma_{t,x_t}$) are constant as a function of $x_t$, or both are rank-1. Both are forms of aliasing, as they make us lose track of the state ($s_t=B[y_{t-1},x_t]$) via loss of $x_t$ and $y_{t-1}$ information, respectively. Note that the presence of any dead
step forces a pure chance reward ($\J = 1/2$) regardless of subsequent
steps. We say that the code is \emph{alive} if it has no dead steps. Next,
define a \emph{half-reset} as a step where only one of the two matrices has
rank 1.

Denote by $k$ the number of half-resets and consider a live code with
$k=0$. By being non-dead and non-half-reset, each step must follow the
correct processing of $x_t,y_{t-1}$ or the anti-correct one. This provides
$2^{T-1}$ codes with $\J = 0$ and $2^{T-1}$ codes with $\J = 1$. Next,
consider a single half-reset ($k=1$) at $t = t_1$. This means $t_1$ ignores
its CoT token for one of the $x_{t_1}$ values ($x_{d_1}$). As a result, the
reward on inputs for which $x_{t_1} = x_{d_1}$ is $\J = 1/2$ regardless of
subsequent steps. However, conditioning on $x_{t_1}\neq x_{d_1}$ the half
reset is effectively not present, leading, by similar logic to the $k=0$
case, to $2^{T-2}$ codes with $\J = 1$ and $2^{T-2}$ codes with $\J = 0$ on
that branch. Weighing the two equally probable events,
$x_{t_1} = x_{d_1}$ and $x_{t_1}\neq x_{d_1}$, we find codes with
input-averaged reward $\J = 1/4 + 1/2$ and codes with $\J = 1/4$.

Generalizing this logic to arbitrary $k$, taking into account combinatorial
factors and dead steps, one obtains the model's normalizing constant --- its
partition function --- in closed form (see App. \ref{app:parityZ}),
\begin{equation}
Z(\invtemp,T) \;=\; e^{\invtemp/2}\left[\,2^{4T-2} - 2\cdot 10^{\,T-1}
\;+\; 2^{T}\sum_{k=0}^{T-1}\binom{T-1}{k}\,4^{k}
\cosh\!\left(\frac{\invtemp}{2^{\,k+1}}\right)\right].
\label{eq:parityZ}
\end{equation}
The average reward is then given by $\mathbb{E}_{\code \sim p_{spin}}[J(\sigma)]=\partial_{\beta} \log(Z(\beta,T))$, in large $T$ limit, it shows a first order phase transition at $\tau_c = 1/(6T\log(2))$, where $J$ goes from $1$ to chance ($1/2$) where the entropy of all the dead codes wins over the reward scale (see App. \ref{app:parityZ:transition}). 

\section{Further Experimental Results}
\label{sec:further_results}

The results of previous sections raise several natural questions that are addressable numerically. The first concerns the tightness of our bound and the importance of diffusive-barrier or barren plateaux in the landscape. In Table \ref{tab:shortfall} we measure that difference at negligible temperatures, showing that for many untied tasks RLVR falls much below the spin model which consistently obtaines near perfect reward. The success of the Monte-Carlo sampling suggests that diffusive barriers for the spin model are mild. 

This apparent gap motivates us to look for potential improvements on the RLVR side  (see App. \ref{app:cures} for further detail). The success of Monte Carlo samplers motivates the use of periodic resets as a direct means of expediting diffusion.
The existence of a phase transition in the parity task motivates us to consider large regulator coefficients (temperatures) to reduce free-energy and entropic barriers. Figure \ref{fig:temperature_untied}, left panel, shows the results of these, with about half the models responding well to high entropy regulator levels. Resets (solid markers) work even better for this class. 


\begin{figure}[t]
\centering
\includegraphics[width=\textwidth]{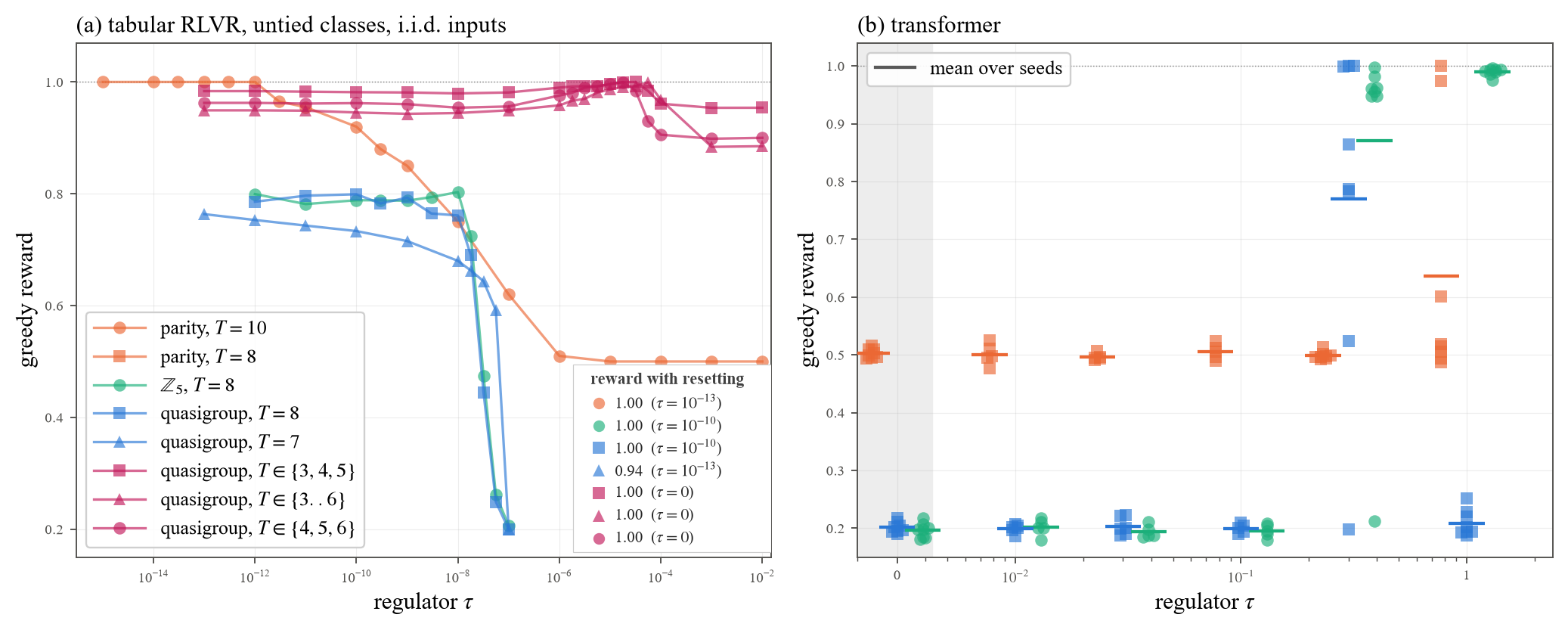}
\caption{\textbf{The entropy regulator on the untied classes.}
\textbf{(a)} Tabular RLVR from a uniformly random policy at a constant regulator $\tau$, scored by the greedy reward of the
endpoint policy. Each point is a mean over independent runs: $100$ for parity (window $(0,0)$), $240$ for the
two classes at $T=8$ using window $(0,0)$ ($\mathbb{Z}_5$, quasigroup), $12$ for the narrow-window classes using
window $(2,1)$ (at $T\in\{3,4,5\}$ and $T\in\{4,5,6\}$) and window $(1,1)$ (at $T\in\{3,\dots,6\}$), and for
quasigroup using window $(0,3)$ at $T=7$. The inset table summarises reset interventions of the same class ($2\%$ of the cells redrawn
uniformly every $10$ iterations for $100$ cycles): the reward attained and the $\tau$ of the run it summarises.
\textbf{(b)} The one-block transformer under the gated curriculum, REINFORCE with an entropy bonus
$\beta$ (occupancy convention), read as greedy accuracy on held-out inputs of length $8$ at the
final step. One point is shown per seed, bar is the mean over seeds. The transformer has no direct counterpart of
tabular cell resets.}
\label{fig:temperature_untied}
\end{figure}

From a different angle, the lack of a reward barrier in the landscape motivates us to look for variations of the task that induce such barriers. We consider tied tables or correlated inputs, wherein the inputs are drawn from a Markov chain with a $k>1/q$ chance of emitting the previous token. Both variations violate different lemmas of our escape proof, and indeed in Tab. \ref{tab:trapsexist}, we numerically establish the existence of true energy barriers. We note in passing that, for tied models, these barriers are microscopic yet common. Namely, they have a reward barrier equal to failing on a few specific input combinations, but occur for most initialization seeds. For correlated inputs, barriers are macroscopic but rare (i.e. involve a finite fraction of input combinations but only occur for about $12\%$ of initialization seeds).

\subsection{Transformers on the same task}
\label{sec:transformer}

We consider tabular policies as a toy model for transformers, which raises a natural question: do the escape paths we establish for tabular policies also exist for transformers trained with RLVR? We would expect an affirmative answer if two things hold: a transformer can express a tabular policy, and it can traverse the escape route between one code and the next without losing reward. We show below that both do hold. Motivated by these results, we train transformers on the same tasks under sufficient entropy regularization, and show that they often converge to 
$J=1$ consistent with a benign landscape (Fig. \ref{fig:temperature_untied} panel b).

Starting with expressivity, we verify that a modest $791$k-parameter transformer can be
brought to express random untied tabular policies, 
with wider windows taking longer to train, suggesting an implicit bias
favoring narrow tables (see App.~\ref{app:containment},
Table~\ref{tab:containment}).

We further test how well such a transformer follows an escape path of Monte-Carlo. To this end, we locate distinct codes in
the chain, train a transformer on one such code
($\sigma_i$), save its weights ($\theta_i$), then fine-tune it on the next distinct
code ($\sigma_{i+1}\neq\sigma_i$, on visited cells) and obtain the resulting
weights ($\theta_{i+1}$). Notably, such fine-tuning never fails to reach perfect
agreement: At $T=6$, all $189$ points reproduce their code exactly on all $5^6=15{,}625$
inputs. We then consider a linear interpolation between $\theta_i$ and
$\theta_{i+1}$ and test whether this incurs a drop in reward along the path.
Interestingly, for the large majority of sequential codes --- $159$ of $180$
steps --- no such drop is observed, and for the remaining small fraction the drop in the reward is at most $2\times10^{-3}$. See Tab. \ref{tab:escape} and App.~\ref{app:containment} for further details.

The above results suggest transformers should inherit the simplifying landscape
features of untied tabular models. Motivated by this, we study the
effects of strong entropy regulators (standard occupancy-weighted regulator) on
the same three tasks as in the tabular setting, together with curriculum learning to prevent signal-to-noise
issues. As discussed in App.~\ref{app:cures} and shown in
Fig.~\ref{fig:temperature_untied}, without a
regulator, and at every coefficient up to $\beta^{-1}=0.1$, no seed of any task holds
more than length $T=2$ and every seed is at chance at length $T=8$. At $\beta^{-1}=0.3$
five of eight seeds on $\mathbb{Z}_5$ and three of eight on the quasigroup pass
$0.95$ at length $T=8$, and they carry the curriculum far beyond it, to $T=18$ on $\mathbb{Z}_5$ and to length $T=20$ on three of the quasigroup seeds. 

Notably, regulator choices here far exceed standard practice
\citep{ahmed2019, cui2025entropy}, by one to two orders of magnitude. Around $\beta^{-1}=0.3$, the policy carries $42\,\%$ of the maximum entropy ($\ln 5$). We note in passing that before this strong regulator choice, we made many changes to optimization (REINFORCE,GRPO,Expert-Iteration), curricula, adaptive batching, eps-exploration, and combinations thereof, all failed to learn at $T=4$.

\section{Discussion}
\label{sec:discussion}

In this work, we established a mapping between RLVR on tabular models and a spin-glass model whose spins take values in action space and sit at cells/table-entries. Configurations of spins amount to deterministic RL policies. The spin-models appear to enjoy a larger degree of tractability compared to the original RLVR problem, and their free energy lower-bounds the RLVR loss. Gaps between plain RLVR and these lower-bounds motivated us to consider strong regulator effects. Surprisingly, they facilitate RLVR training from scratch for both tabular models and transformers. We further describe several scenarios in which both the spin-model and RLVR end up trapped in valleys of the loss. 

We hope to leverage this framework to further study fundamental limitations of RLVR. In particular, it would be interesting to apply it to harder algorithmic tasks which may require invoking the full spin-glass framework, replica-symmetry breaking included. In addition, it would be interesting to incorporate the effects of pre-training and implicit biases of more complex architectures via the inclusion of prior terms.  

{\bf Limitations.}  Pure tabular models are a rough caricature of transformers and thus offer only a qualitative link. We further focused on untrained models, based on the concept of skill compositions. Extending this to real pre-trained models carries in much complexity. While prior terms, including $KL$-penalties, can close some of these gaps qualitatively, their specific relation to RLVR training of networks remains to be studied.  The strong regulator choices which facilitated learning here are likely damaging for actual pre-trained models.  While some elements of spin-glass theory have been used here, in particular the notion of pure-states and their link to local Monte-Carlo updates, it is yet to be determined whether important concepts such as replica symmetry breaking carry through to these specialized spin models.

\bibliographystyle{iclr2027_conference}
\bibliography{refs}

\appendix
\newpage
\section{Definitions of various RL quantities}
\label{app:dQ}

Let $x=x_1...x_T$ be the input sequence drawn
with probability $P(x)$ (uncorrelated or Markovian); at step $t$ the policy reads the cell
$\cell_t(x,y_{<t})$ determined by the input and the tokens emitted so far, it then emits
$y_t\sim\pol_{\cell_t}$, and the rollout ends with reward $R(x,y)$ (where $y$ is the full CoT).  For a CoT prefix
$y_{\le t}$ let
\begin{equation}
V_t(x,y_{<t}) = \mathbb{E}_{\pol}\big[R(x,y)\,\big|\,x,\,y_{<t}\big]
\label{eq:rtg}
\end{equation}
be the expected reward conditioned on CoT prefix $y_{<T}$ and $x$, where $y_t..y_T$ follow $\pol$.
The \emph{visitation} ($d(c)$) of a cell and its \emph{action values} ($d(c)Q(c,a)$) are
\begin{align}
\dc &= \sum_x P(x)\sum_t\sum_{y_{<t}} P_\pol(y_{<t}\mid x)\,
       \mathbf{1}\big[\cell_t(x,y_{<t})=\cell\big], \label{eq:d}\\
\dc\,\Qc &= \sum_x P(x)\sum_t\sum_{y_{<t}} P_\pol(y_{<t}\mid x)\,
           \mathbf{1}\big[\cell_t(x,y_{<t})=\cell\big]\;V_{t+1}\big(x,(y_{<t},a)\big). \label{eq:dQ}
\end{align}
So $\dc$ is the expected number of times a rollout consults $\cell$, and $\Qc$ is
the expected reward of a rollout that takes action $a$ at cell $c$ and
follows $\pol$ everywhere else. Notably for untied policies, the choice of $c$ fixes a single $t$. 

These quantities arise when differentiating the reward with respect to an action $a$ at cell $c$, specifically 
$\partial\J/\partial\pol_\cell(a)=\dc\,\Qc$ for both tied and untied models. For untied models, these gradients are independent of $\pol_\cell(a)$, and the balancing of the gradient with the entropy terms yields the CAVI equations of the main text. For tied models, this independence does not hold, and gradient descent on $\pol_\cell(a)$ amounts to CAVI with a strong damping factor. In practice we worked with a 30\% damping factor; since a fixed point of the damped update is a fixed point at any damping, and every tied endpoint we report has residual exactly zero, the stationary points quoted here do not depend on that choice.

The reward depends on
the input only through the running fold $s_t=s_{t-1}\circ x_t$, and the policy
reads the input only through the window $W_t=(x_{t-n_p},\ldots,x_{t+n_f})$, with
positions outside $1,\ldots,T$ reading the boundary symbol. Because the inputs are
independent or Markov, a rollout is also a Markov chain on the joint CoT and input-window state
\begin{equation}
z_t=(s_{t-1},\,W_t,\,y_{t-1}),
\end{equation}
Step
$t$ emits $a\sim\pol_{\cell_t(W_t,y_{t-1})}$, updates the $s_{t-1}$ with the entry $x_t$
of the window, shifts the window, and draws the one new input $v=x_{t+n_f+1}$ from
the Markovian input distribution ($P(v \mid W)$), allowing us to define a transfer matrix 
\begin{equation}
M_t(z_{t+1}\mid z_t)=\pol_{\cell_t(W_t,\,y_{t-1})}(y_t)\;P\big(x_{t+n_f+1}\mid W_t\big),
\qquad
z_{t+1}=\big(s_{t-1}\circ x_t,\;W_{t+1},\;y_t\big),
\end{equation}
with $s\circ x_t=x_t$ at
$t=1$. Its forward and backward passes are
\begin{align}
\alpha_{t+1}(z') &= \sum_z\alpha_t(z)\,M_t(z'\mid z), &
\beta_t(z) &= \sum_{z'}M_t(z'\mid z)\,\beta_{t+1}(z'),
\end{align}
started from $\alpha_1$, the distribution of the first window with $y_0=\mathrm{BOS}$,
and from $\beta_{T+1}(z)=\mathbf{1}[y_T=s_T]$. They give
\begin{align}
\J &= \sum_z\alpha_t(z)\,\beta_t(z)\quad\text{for every }t,\\
\dc &= \sum_t\sum_{z:\,\cell_t(z)=\cell}\alpha_t(z),\\
\dc\,\Qc &= \sum_t\sum_{z:\,\cell_t(z)=\cell}\alpha_t(z)\;
  \mathbb{E}_v\big[\beta_{t+1}(z')\,\big|\,z,\,y_t=a\big],
\end{align}
where the last expectation is over the new input alone. The state has
$q^{\,n_p+n_f+3}$ values whatever the length, so the cost is proportional to
$T\,q^{\,n_p+n_f+5}$, linear in $T$. Under tying the cell does not depend on $t$ and
the sums over $t$ collect every consultation, as in Eq.~\eqref{eq:d}. For classes
trained on several lengths, $\J$, $\dc$ and $\dc\,\Qc$ are averaged over the lengths
with equal weight before $\Qc$ is formed.

\section{Local maxima of the tabular RLVR objective are connected by neutral paths to deterministic codes}
\label{app:neutral_path}

\subsection{Setting}

\begin{definition}[Task data]\label{def:task}
An \emph{action alphabet} $\A$ with $|\A| \ge 2$; an \emph{input set} $\X$ with a probability distribution $P$; To accommodate multi-$T$ datasets we defined the \emph{length} $T(x) \in \{1,2,\dots\}$ for each $x \in \X$, with $T_{\max} := \max_x T(x)$; and a \emph{reward} $R(x,y) \in \R$ for every $x \in \X$ and every \emph{trajectory} $y = (y_1,\dots,y_{T(x)}) \in \A^{T(x)}$. For $1 \le t \le T(x)$ write $y_{<t} := (y_1,\dots,y_{t-1}) \in \A^{t-1}$ (the empty word when $t=1$). The reward is arbitrary.
\end{definition}

\begin{definition}[Cells and routing]\label{def:routing}
A finite set $\C$ of \emph{cells} with a \emph{step map} $\step : \C \to \{1,\dots,T_{\max}\}$, and for every $x \in \X$ and $t \in \{1,\dots,T(x)\}$ a \emph{routing map}
\[
\kappa_t(x,\cdot) : \A^{t-1} \to \C \qquad\textup{with}\qquad \step\big(\kappa_t(x,y_{<t})\big) = t \ \textup{ for all } y_{<t}.
\]
$\kappa_t(x,y_{<t})$ is the cell (table row) consulted at step $t$ on input $x$ after the tokens $y_{<t}$ have been emitted. It does not depend on any policy. The constraint $\step(\kappa_t(\cdot)) = t$ is the only property of the routing that is used.
\end{definition}

\begin{definition}[Policies, objective, codes]\label{def:J}
Let $\simp(\A) := \{p \in \R^{\A} : p(a) \ge 0,\ \sum_a p(a) = 1\}$ and $\Pol := \prod_{c\in\C}\simp(\A) \subset \R^{\C\times\A}$. A \emph{policy} is $\pi = (\pi_c)_{c\in\C} \in \Pol$. For $x\in\X$ and $y \in \A^{T(x)}$,
\begin{equation}\label{eq:traj}
P_\pi(y\mid x) := \prod_{t=1}^{T(x)} \pi_{\kappa_t(x,y_{<t})}(y_t), \qquad
J(\pi) := \sum_{x\in\X} P(x) \sum_{y\in\A^{T(x)}} P_\pi(y\mid x)\,R(x,y).
\end{equation}
A \emph{code} is a map $\sigma : \C \to \A$; it is identified with the deterministic policy $e_\sigma$, $(e_\sigma)_c := e_{\sigma_c}$, where $e_a \in \simp(\A)$ is the point mass at $a$. We write $J(\sigma) := J(e_\sigma)$. The codes are the vertices of $\Pol$.
\end{definition}

\subsection{Multilinearity}

\begin{lemma}[Single visit]\label{lem:single}
For every $x$ and $y \in \A^{T(x)}$, the cells $\kappa_1(x,y_{<1}),\dots,\kappa_{T(x)}(x,y_{<T(x)})$ are pairwise distinct.
\end{lemma}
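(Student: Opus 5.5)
The argument rests only on the step-map constraint in Definition~\ref{def:routing}: every cell carries its own time index. Fix $x\in\X$ and $y\in\A^{T(x)}$, and take two indices $1\le t<t'\le T(x)$. Write $c=\kappa_t(x,y_{<t})$ and $c'=\kappa_{t'}(x,y_{<t'})$.

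By the routing constraint, $\step(c)=t$ and $\step(c')=t'$. The step map $\step:\C\to\{1,\dots,T_{\max}\}$ is a function, so a single cell has a single step value. If $c=c'$, then $t=\step(c)=\step(c')=t'$, which contradicts $t<t'$. Hence $c\neq c'$, and since the pair of indices was arbitrary, the cells consulted along the trajectory are pairwise distinct.

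Two points are worth recording for later use.

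\begin{itemize}
\item The argument never uses the policy, the reward, the distribution $P$, or the inner structure of the routing beyond $\step(\kappa_t(\cdot))=t$. This matches the remark in Definition~\ref{def:routing} that this is the only property of the routing used.
\item This is exactly where untiedness enters. A tied class would identify cells across different $t$, so no such step map would exist, and the lemma would fail.
\end{itemize}

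There is no real obstacle here. The only care needed is to state that $\step$ is single-valued on $\C$ and to apply it to both indices.

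This is the property that will make $P_\pi(y\mid x)$ in Eq.~\eqref{eq:traj} involve each coordinate block $\pi_c$ at most once. It therefore underpins the multilinearity of $J$ in the blocks $(\pi_c)_{c\in\C}$ used in the subsequent lemma.
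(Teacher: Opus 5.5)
Your proof is correct and is the paper's argument: the routing constraint $\step(\kappa_t(x,y_{<t}))=t$ forces cells consulted at different steps to have different $\step$-values, so they must be distinct. The paper states this in one line, and your version only spells out that $\step$ is single-valued on $\C$.
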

\begin{proof}
$\step(\kappa_t(x,y_{<t})) = t$, so cells consulted at different steps have different images under $\step$.
\end{proof}

Formula \eqref{eq:traj} defines $J$ as a polynomial in the coordinates $\{\pi_c(a)\}_{c\in\C,a\in\A}$ on all of $(\R^{\A})^{\C}$; call $\{\pi_c(a)\}_{a\in\A}$ the \emph{block} of cell $c$.

\begin{lemma}[Multilinearity]\label{lem:multilinear}
Every monomial of the polynomial $J$ contains at most one variable from each block.
\end{lemma}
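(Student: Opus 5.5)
The plan is to expand the polynomial $J$ from \eqref{eq:traj} into monomials and use Lemma~\ref{lem:single} to show that no monomial can contain two variables from the same block. Fix $x\in\X$ and a trajectory $y\in\A^{T(x)}$. The contribution of this pair to $J$ is $P(x)R(x,y)\prod_{t=1}^{T(x)}\pi_{\kappa_t(x,y_{<t})}(y_t)$. This is a constant times a product of exactly $T(x)$ coordinates, and the $t$-th factor comes from the block of the cell $\kappa_t(x,y_{<t})$.

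By Lemma~\ref{lem:single} these $T(x)$ cells are pairwise distinct, because $\step(\kappa_t(x,y_{<t}))=t$ separates them. So each such product is a monomial that draws at most one variable from each block, and that variable appears to the first power. In particular no factor $\pi_c(a)\pi_c(b)$ can occur, whether $a=b$ or $a\neq b$.

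The polynomial $J$ is by definition the finite sum of these terms over $x\in\X$ and $y\in\A^{T(x)}$. Collecting like terms only merges monomials that have identical variable sets, possibly cancelling them. It never creates a new monomial. Every monomial of $J$ is therefore one of the products above, and the claim follows.

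There is no real obstacle here. The one point to handle carefully is to read ``monomial of $J$'' as a monomial of the polynomial after like terms are collected, and to note that collection cannot introduce repeated block variables. A remark afterwards could record the consequence used elsewhere in the paper: with all other blocks fixed, $J$ is affine in $\pi_c$, so $J=\sum_a \pi_c(a)\,\partial J/\partial\pi_c(a)$ up to a term independent of $\pi_c$, and $\partial J/\partial\pi_c(a)=d(c)Q(c,a)$ does not depend on $\pi_c$.
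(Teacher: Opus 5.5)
Your proof is correct and is essentially the paper's argument: each trajectory term in \eqref{eq:traj} is a product of one coordinate from each of $T(x)$ cells that are pairwise distinct by Lemma~\ref{lem:single}, and $J$ is a linear combination of such products. Your remark that collecting like terms can only merge or cancel monomials, never create one with two variables from the same block, is a harmless refinement of that same point.
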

\begin{proof}
For fixed $(x,y)$ the product in \eqref{eq:traj} has one factor $\pi_c(y_t)$ from each of $T(x)$ cells, which are pairwise distinct by Lemma~\ref{lem:single}. $J$ is a linear combination of such products.
\end{proof}

\subsection{Local maxima, faces, tangent spaces}

\begin{definition}[Local maximum]\label{def:locmax}
$\pi^\star \in \Pol$ is a \emph{local maximum} of $J$ on $\Pol$ if there is an open set $\Nb \subset \R^{\C\times\A}$ containing $\pi^\star$ with $J(\pi) \le J(\pi^\star)$ for all $\pi \in \Nb\cap\Pol$. (The point $\pi^\star$ may lie on the boundary of $\Pol$; local maximality will be invoked only against competitors that are shown to lie in $\Pol$.)
\end{definition}

\begin{definition}[Support, compatible face, compatible codes]\label{def:face}
For $p \in \simp(\A)$ let $\supp(p) := \{a \in \A : p(a) > 0\}$, a set of \emph{actions}. For $B \subseteq \A$ let $\simp(B) := \{p \in \simp(\A) : \supp(p) \subseteq B\}$; this set is closed and contains the point masses $e_a$, $a \in B$. For $\pi^\star \in \Pol$ define
\[
F(\pi^\star) := \prod_{c\in\C}\simp\big(\supp(\pi^\star_c)\big).
\]
$F(\pi^\star)$ is the smallest face of $\Pol$ containing $\pi^\star$; it is closed, and $\pi^\star$ lies in its relative interior, automatically: a point on the relative boundary of $\simp(\supp(\pi^\star_c))$ would have $\pi^\star_c(a) = 0$ for some $a \in \supp(\pi^\star_c)$, contradicting the definition of the support. We comment that \emph{relative interior} has its usual meaning in convex analysis: the interior relative to the affine hull. If $|\supp(\pi^\star_c)| = 1$, the factor $\simp(\supp(\pi^\star_c))$ is a single point; its affine hull is that point, so its relative interior is the point itself and its relative boundary is empty. 

A code $\sigma$ is \emph{compatible with $\pi^\star$} if $\pi^\star_c(\sigma_c) > 0$ for every $c$, i.e.\ $\sigma_c \in \supp(\pi^\star_c)$. The compatible codes are exactly the vertices $e_\sigma$ of $F(\pi^\star)$.
\end{definition}

\begin{definition}[Tangent spaces]\label{def:tangent}
For $\pi^\star\in\Pol$ and $c \in \C$ let
$V_c := \{v \in \R^{\A} : \sum_a v(a) = 0,\ v(a) = 0 \textup{ for all } a \notin \supp(\pi^\star_c)\}$.
Then $V_c = \{0\}$ iff $|\supp(\pi^\star_c)| = 1$, and $\pi - \pi^\star \in \prod_c V_c$ for every $\pi \in F(\pi^\star)$.
\end{definition}

\begin{lemma}[Two-sided feasibility]\label{lem:relint}
For every $c$ and $v \in V_c$ there is $\varepsilon_0 > 0$ with $\pi^\star_c + \varepsilon v \in \simp(\supp(\pi^\star_c))$ for all $\varepsilon \in [-\varepsilon_0,\varepsilon_0]$.
\end{lemma}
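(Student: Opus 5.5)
The argument is elementary and uses only the structure of $V_c$ and of $\simp(\supp(\pi^\star_c))$. Write $S := \supp(\pi^\star_c)$ and $m := \min_{a\in S}\pi^\star_c(a) > 0$; this is positive because $S$ is finite and each of its entries is strictly positive by the definition of the support. If $v = 0$, which is forced when $|S|=1$, any $\varepsilon_0>0$ works, so assume $v \neq 0$. I will take
\[
\varepsilon_0 := \frac{m}{\max_{a\in\A}|v(a)|} > 0
\]
and check the three defining conditions of membership in $\simp(S)$ for $p := \pi^\star_c + \varepsilon v$ with $|\varepsilon|\le\varepsilon_0$.

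\textbf{Normalisation.} We have $\sum_a p(a) = \sum_a \pi^\star_c(a) + \varepsilon\sum_a v(a) = 1 + 0 = 1$, because $\sum_a v(a)=0$ for $v\in V_c$.

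\textbf{Support.} For $a\notin S$, both $\pi^\star_c(a)=0$ and $v(a)=0$ (the latter by the definition of $V_c$), so $p(a)=0$. Hence $\supp(p)\subseteq S$, and these coordinates are also trivially nonnegative.

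\textbf{Nonnegativity on $S$.} For $a\in S$,
\[
p(a) \ge \pi^\star_c(a) - |\varepsilon|\,|v(a)| \ge m - \varepsilon_0 \max_{a'}|v(a')| = 0 .
\]

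Together these give $p\in\simp(\A)$ with $\supp(p)\subseteq S$, that is, $p\in\simp(\supp(\pi^\star_c))$, for every $\varepsilon\in[-\varepsilon_0,\varepsilon_0]$.

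There is no real obstacle here. The only point needing care is that the step size must be symmetric in $\varepsilon$, which is exactly why strict positivity of $\pi^\star_c$ on its support, equivalently $\pi^\star_c$ lying in the relative interior of its face (Definition~\ref{def:face}), is the key input. If a strict-interior version is wanted later, halving $\varepsilon_0$ makes $p(a) > 0$ on all of $S$.
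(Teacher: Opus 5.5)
Your proof is correct and follows the paper's argument. Normalisation comes from $\sum_a v(a)=0$, vanishing off the support comes from the definition of $V_c$, and nonnegativity on the support comes from $\min_{a\in\supp(\pi^\star_c)}\pi^\star_c(a)>0$. The only cosmetic difference is the choice of step size: the paper takes $\varepsilon_0 := \min_{a\in\supp(\pi^\star_c)}\pi^\star_c(a)/(1+\max_a|v(a)|)$, whose extra $1$ in the denominator removes your separate $v=0$ case and gives strict positivity directly.
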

\begin{proof}
$\pi^\star_c + \varepsilon v$ sums to $1$ and vanishes off $\supp(\pi^\star_c)$. On $\supp(\pi^\star_c)$ the coordinates of $\pi^\star_c$ are strictly positive, so $\varepsilon_0 := \min_{a\in\supp(\pi^\star_c)}\pi^\star_c(a)\,/\,(1+\max_a|v(a)|)$ works.
\end{proof}

\subsection{The theorem}

\begin{theorem}\label{thm:main}
Let $\pi^\star$ be a local maximum of $J$ on $\Pol$. Then $J(\pi) = J(\pi^\star)$ for all $\pi \in F(\pi^\star)$. Consequently:
\begin{enumerate}
\item[(i)] every code $\sigma$ compatible with $\pi^\star$ has $J(\sigma) = J(\pi^\star)$;
\item[(ii)] the segment $s \mapsto (1-s)\pi^\star + s\,e_\sigma$, $s \in [0,1]$, lies in $F(\pi^\star)$, and $J$ is constant along it;
\item[(iii)] any sequence of moves, each replacing one $\pi_c$ by another element of $\simp(\supp(\pi^\star_c))$, stays in $F(\pi^\star)$ and never changes $J$. In particular the cells may be collapsed onto compatible actions one at a time, in any order, ending at a compatible code.
\end{enumerate}
\end{theorem}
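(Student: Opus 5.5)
The key claim is that $J$ is constant on the face $F(\pi^\star)$; items (i)–(iii) then follow directly. The approach is to show that, restricted to the affine hull of $F(\pi^\star)$, the polynomial $J$ is a multilinear function with a local maximum at a relative interior point, and that such a function must be constant on the whole face.

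\textbf{Step 1: restrict to the face.} Parametrize $F(\pi^\star)$ as $\pi^\star + v$ with $v=(v_c)_c\in\prod_c V_c$ (Definition~\ref{def:tangent}). By Lemma~\ref{lem:multilinear}, $g(v):=J(\pi^\star+v)$ is a polynomial in the blocks $v_c$ that is affine in each block separately. Expanding, it is a finite sum
\[
g(v)=\sum_{S\subseteq\C} L_S\big((v_c)_{c\in S}\big),
\]
where each $L_S$ is multilinear, with exactly one linear factor from each block in $S$. Here $L_\emptyset=J(\pi^\star)$.

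\textbf{Step 2: all terms with $S\neq\emptyset$ vanish.} Lemma~\ref{lem:relint}, applied blockwise, gives a neighbourhood of $0$ in $\prod_c V_c$ that maps into $F(\pi^\star)\subseteq\Pol$. Shrinking it, local maximality yields $g(v)\le g(0)$ for all small $v$.

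Fix $v$ and consider the one-variable polynomial $\varepsilon\mapsto g(\varepsilon v)=\sum_S \varepsilon^{|S|}L_S(v)$. It has a local maximum at $\varepsilon=0$. The goal is to show each $L_S(v)=0$ for $S\neq\emptyset$. A single scaling is not enough, because cancellations between different $S$ could hide nonzero terms. So I would scale the blocks independently: replace $v_c$ by $\varepsilon_c v_c$ with independent small real $\varepsilon_c$ of either sign. This gives
\[
h(\varepsilon):=\sum_S \Big(\prod_{c\in S}\varepsilon_c\Big) L_S(v),
\]
a multilinear polynomial in finitely many real variables with a local maximum at $0$.

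\textbf{Main obstacle: a multilinear polynomial with a local max at an interior point is constant.} I would prove this by induction on the number of variables. Write $h=\varepsilon_1 A(\varepsilon')+B(\varepsilon')$. For fixed small $\varepsilon'$, the function $\varepsilon_1\mapsto h$ is affine, and $\varepsilon_1$ can take either sign. The maximum condition at $\varepsilon'=0$ then forces $A(0)=0$. This is not yet enough, since $A$ must vanish near $0$, not just at $0$.

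The cleaner route is to take the lowest-degree nonzero homogeneous part $h_k$ of $h-h(0)$, with $k\ge1$. Along rays $\varepsilon=r u$ we have $h(ru)-h(0)=r^k h_k(u)+O(r^{k+1})$, so the local max requires $r^k h_k(u)\le0$ for all small $r$ of both signs. There are two cases:
\begin{itemize}
\item If $k$ is odd, flipping the sign of $r$ forces $h_k(u)=0$ for all $u$.
\item If $k$ is even, flipping the sign of one coordinate $u_c$ with $c\in S$ flips the sign of that monomial. Since $h_k$ is multilinear, it is odd in $u_c$ on those terms. More precisely, $h_k(u)\le0$ for all $u$, and $h_k$ is affine in $u_c$, so it must be independent of $u_c$. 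That holds for every $c$, and since $h_k$ is homogeneous of degree $k\ge1$, it follows that $h_k\equiv0$.
\end{itemize}
Either way $h_k=0$, a contradiction. Hence $h$ is constant, and setting all $\varepsilon_c=1$ gives $g(v)=g(0)$ for every admissible $v$.

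One more step is needed, because $v$ ranges only over a neighbourhood of $0$. A polynomial that is constant on an open subset of the affine space $\prod_c V_c$ is constant on the whole space. Therefore $J\equiv J(\pi^\star)$ on all of $F(\pi^\star)$.

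\textbf{Step 3: consequences.}
\begin{enumerate}
\item[(i)] Compatible codes are vertices of $F(\pi^\star)$, so each has $J(\sigma)=J(\pi^\star)$.
\item[(ii)] $F(\pi^\star)$ is a product of simplices and hence convex, so the segment from $\pi^\star$ to $e_\sigma$ lies in it and $J$ is constant along it.
\item[(iii)] Each move replaces one $\pi_c$ by an element of $\simp(\supp(\pi^\star_c))$, which keeps the policy inside the product set $F(\pi^\star)$, so $J$ never changes. Collapsing the cells one by one onto point masses $e_a$ with $a\in\supp(\pi^\star_c)$ ends at a compatible code.
\end{enumerate}
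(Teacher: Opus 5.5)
Your proof is correct and follows essentially the same route as the paper. Both arguments use the exact multilinear expansion of $J(\pi^\star+\delta)$ into forms indexed by sets of cells, kill the lowest-order non-vanishing form by flipping the sign of one block, and use exactness of the expansion to pass from small perturbations to all of $F(\pi^\star)$.

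The only difference is bookkeeping. The paper takes a minimal-cardinality set $U_0$ with $T_{U_0}\not\equiv 0$ on $\prod_{c\in U_0}V_c$ and perturbs only those blocks. The increment is then exactly $\varepsilon^k\big(\prod_{c\in U_0}s_c\big)T_{U_0}(v)$, so no ray asymptotics are needed. Your lowest-total-degree version has to carry the $O(r^{k+1})$ remainder instead.

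Two parts of your write-up are unnecessary. The odd/even split can go, because the argument that $h_k$ is affine in each $u_c$ and bounded above already covers every $k$. The final identity-theorem step can also go, because your scaling argument works for any fixed $v\in\prod_c V_c$, not only small ones.
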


\begin{proof}
\textbf{Step 1: exact expansion.} Substitute $\pi_c(a) = \pi^\star_c(a) + \delta_c(a)$ into the polynomial $J$. By Lemma~\ref{lem:multilinear} a monomial has the form $\prod_{c\in W}\pi_c(a_c)$ for some set of cells $W$, and
\[
\prod_{c\in W}\big(\pi^\star_c(a_c) + \delta_c(a_c)\big) = \sum_{U\subseteq W}\ \prod_{c\in U}\delta_c(a_c)\prod_{c\in W\setminus U}\pi^\star_c(a_c).
\]
Collecting terms according to the set $U$ of cells that contribute a $\delta$-factor gives, for all $\delta \in (\R^{\A})^{\C}$ and with no remainder,
\begin{equation}\label{eq:expand}
J(\pi^\star+\delta) = \sum_{U\subseteq\C} T_U(\delta_U), \qquad T_\emptyset = J(\pi^\star),
\end{equation}
where $\delta_U := (\delta_c)_{c\in U}$ and $T_U$ is a multilinear form in the arguments $\delta_c$, $c \in U$ (it is a sum of products containing exactly one coordinate of each $\delta_c$, $c\in U$). In particular $T_U(\delta_U) = 0$ if $\delta_c = 0$ for some $c\in U$, and $T_U\big((\lambda_c\delta_c)_{c\in U}\big) = \big(\prod_{c\in U}\lambda_c\big)T_U(\delta_U)$ for scalars $\lambda_c$.

\medskip\noindent\textbf{Step 2: every $T_U$ with $U\neq\emptyset$ vanishes on the tangent spaces.} Suppose not. Let $k\ge1$ be the smallest cardinality of a set $U$ such that $T_U$ is not identically zero on $\prod_{c\in U}V_c$; fix such a set $U_0$, $|U_0| = k$, and $v = (v_c)_{c\in U_0}$ with $v_c\in V_c$ and $T_{U_0}(v)\neq0$. For $\varepsilon>0$ and signs $s = (s_c)_{c\in U_0}\in\{+1,-1\}^{U_0}$ put
\[
\delta^{\varepsilon,s}_c := \varepsilon\,s_c\,v_c \ \ (c\in U_0), \qquad \delta^{\varepsilon,s}_c := 0 \ \ (c\notin U_0).
\]
By Lemma~\ref{lem:relint}, applied to each $c\in U_0$, there is $\varepsilon_0>0$ such that $\pi^\star+\delta^{\varepsilon,s}\in F(\pi^\star)\subseteq\Pol$ for all $\varepsilon\in(0,\varepsilon_0]$ and all $2^k$ sign vectors $s$.

Evaluate \eqref{eq:expand} at $\delta^{\varepsilon,s}$. If $U\not\subseteq U_0$, some $c\in U$ has $\delta_c = 0$ and $T_U(\delta_U)=0$. If $U\subseteq U_0$ and $1\le|U|<k$, then $T_U$ vanishes on $\prod_{c\in U}V_c$ by minimality of $k$, and $\delta_U$ lies in that product. The only other nonempty subset of $U_0$ is $U_0$ itself. Hence
\[
J(\pi^\star+\delta^{\varepsilon,s}) - J(\pi^\star) = T_{U_0}\big(\delta^{\varepsilon,s}_{U_0}\big) = \varepsilon^{k}\Big(\prod_{c\in U_0}s_c\Big)T_{U_0}(v).
\]
Choose $s$ with $\prod_{c\in U_0}s_c = \operatorname{sign}T_{U_0}(v)$: all $s_c=+1$ if $T_{U_0}(v)>0$, and exactly one $s_c=-1$ otherwise. Then $J(\pi^\star+\delta^{\varepsilon,s}) - J(\pi^\star) = \varepsilon^k|T_{U_0}(v)|>0$ for every $\varepsilon\in(0,\varepsilon_0]$. These points belong to $\Pol$ and converge to $\pi^\star$ as $\varepsilon\to0$, so they lie in $\Nb\cap\Pol$ for small $\varepsilon$, contradicting Definition~\ref{def:locmax}.

(For $k=1$ this is the first-order statement: $J$ is affine in $\pi_c$, both $\pi^\star_c+\varepsilon v$ and $\pi^\star_c-\varepsilon v$ are feasible, so the slope along $v$ must be zero. For $k\ge2$ it is the same argument applied to the lowest-order non-vanishing cross term, whose sign can always be made positive by flipping one $v_c$.)

\medskip\noindent\textbf{Step 3: conclusion.} Let $\pi\in F(\pi^\star)$ and $\delta := \pi-\pi^\star\in\prod_cV_c$ (Definition~\ref{def:tangent}). By Step~2 every term of \eqref{eq:expand} with $U\neq\emptyset$ vanishes, so $J(\pi) = T_\emptyset = J(\pi^\star)$. Here $\delta$ is not required to be small: local maximality entered only through arbitrarily small moves, but what it established is that the forms $T_U$ vanish identically on the tangent spaces, and \eqref{eq:expand} is exact. In particular $\delta_c = e_{\sigma_c}-\pi^\star_c\in V_c$ reaches the vertex $e_\sigma$ for any compatible code, which is (i). For (ii), $F(\pi^\star)$ is convex. For (iii), each such move keeps $\pi$ in $F(\pi^\star)$.
\end{proof}

\section{The code spin model and its relation to the RLVR objective}
\label{app:correspondence}

This appendix separates the correspondence used in
Section~\ref{sec:dictionary} into its three components: (i) a spin
model defined on deterministic policies, with no reference to stochastic
policies; (ii) the Gibbs--Bogoliubov variational bound for that model over
product trial measures, which produces an objective of
reinforcement-learning form; and (iii) an identity showing that this
variational objective coincides with the true RLVR objective of the
stochastic policy, under a single-visit condition that we state and verify.
Components (i) and (ii) are standard statistical mechanics
\citep{mezardmontanari}; component (iii) is elementary but, to our
knowledge, not available in the literature in this form, so it is derived
here in full.

\subsection{Notation}
\label{app:notation}

Inputs are sequences $x=(x_1,\ldots,x_{T(x)})$ over the alphabet
$\mathcal{A}=\{0,\ldots,q\}$, drawn uniformly from a fixed set $\mathcal{X}$
(all sequences of the training lengths); the target $s^{*}(x)\in\mathcal{A}$
is the quasigroup fold of $x$. A policy class is specified by its cell
index: at generation step $t$, the window contents (input symbols and
previously emitted tokens)
determines a \emph{cell} $c\in\{1,\ldots,\Ncells\}$. We write
\begin{equation}
c_t(x,y_{<t}) \in \{1,\ldots,\Ncells\}
\label{eq:router}
\end{equation}
for the cell consulted at step $t$ on input $x$ after emitting the tokens
$y_{<t}=(y_1,\ldots,y_{t-1})$. Two properties of the router
\eqref{eq:router} are used below:
\begin{itemize}
\item[(V1)] $c_t(x,y_{<t})$ is a deterministic, single-valued function of
its arguments: exactly one cell is consulted per step.
\item[(V2)] \emph{Single visit:} for every $x\in\mathcal{X}$ and every
$y\in\mathcal{A}^{T(x)}$, the visited cells
$c_1(x),c_2(x,y_{<2}),\ldots,c_T(x,y_{<T})$ are pairwise distinct.
\end{itemize}
(V1) holds by construction of a lookup table. (V2) holds trivially for untied policies.

\subsection{The variational bound over product measures}
\label{app:gb}
\label{app:identity}

For any trial distribution $\hat{\pi}$ on $\mathcal{A}^{|\mathcal{C}|}$, the
Gibbs--Bogoliubov--Feynman inequality gives
$F_{\text{var}}[\hat\pi]\equiv \mathbb{E}_{\hat{\pi}}[ E] - \betaent H[\hat\pi] \;\ge\;
F \equiv -\beta^{-1}\log Z(\beta)$,
with equality for $\hat\pi = p_{spin}$, the true distribution of $\sigma$ \citep{mezardmontanari}. Specifically, this holds if we restrict $\hat\pi$ to
products of distributions that are independent over cells given by $\hat\pi=\bigotimes_{c}\hat\pol_c$. This is equivalent to the set of distributions the tabular policy induces. Since $\bigotimes_c \hat\pi_c$ is a product measure, each $\sigma_c$ is drawn independently of the others, so the joint entropy decomposes exactly into a sum of per-cell entropies given by: $H[\bigotimes_c \hat\pi_c] = \sum_c H(\hat \pi_c)$. The expectation of the reward with respect to this variational distribution is given by
\begin{equation}
\tilde{\J}(\pol) \;:=\; -\,\mathbb{E}_{\otimes_c \hat{\pi}_c} [E]
\;=\; \sum_{\code}\Big[\prod_{c=1}^{\Ncells}\hat\pol_c(\code_c)\Big]\,\J(\code).
\label{eq:annealed}
\end{equation}
The variational free energy is therefore
\begin{equation}
-F_{\text{var}}\big[\textstyle\bigotimes_c\hat \pol_c\big]
\;=\; \tilde{\J}(\hat\pol) \;+\; \beta^{-1}\sum_{c}H(\hat\pol_c),
\label{eq:varobj}
\end{equation}
Note that this obtains a similar form to the RLVR loss with the temperature regulator, but with the annealed reward $\tilde{\J}(\hat{\pi})$ over variational policies, rather than the expected reward
$\J(\pol)$ of the stochastic policy. These are different sampling
procedures: $\tilde\J$ draws every table entry once and reuses it,
$\J$ draws an action freshly at every visit. However, we show in the following that the two reward terms coincide.

The stochastic-policy objective is, by the chain rule over generation
steps,
\begin{equation}
\J(\pol) \;=\; \frac{1}{|\mathcal{X}|}\sum_{x\in\mathcal{X}}
\sum_{y\in\mathcal{A}^{T}}
\Big[\prod_{t=1}^{T} \pol_{\,c_t(x,y_{<t})}(y_t)\Big]\,
\delta_{y_{T},\,s_T}.
\label{eq:policyJ}
\end{equation}
Starting from Eq. \eqref{eq:annealed}, insert the trajectory expansion
Eq. \eqref{eq:trajexp} and exchange the (finite) sums:
\begin{equation}
\tilde{\J}(\pol)
= \frac{1}{|\mathcal{X}|}\sum_{x}\sum_{y\in\mathcal{A}^{T(x)}}
\delta_{y_{T},\,s_T}
\underbrace{\sum_{\code\in\mathcal{A}^{\Ncells}}
\Big[\prod_{c=1}^{\Ncells}\pol_c(\code_c)\Big]
\prod_{t=1}^{T(x)} \delta_{\code_{\,c_t(x,y_{<t})},\;y_t}}_{\displaystyle
\equiv\; W(x,y)} .
\label{eq:swap}
\end{equation}
Fix $(x,y)$ and evaluate $W(x,y)$. By (V1) the visited set
$\mathcal{C}(x,y)=\{c_1(x),\ldots,c_T(x,y_{<T})\}$ is well defined, and by
(V2) it contains $T$ distinct cells, so the constraints
$\delta_{\code_{c_t},y_t}$ act on distinct coordinates of $\code$. The sum
over $\code$ factorizes over cells:
\begin{align}
W(x,y)
&= \prod_{t=1}^{T(x)}\Big[\sum_{a\in\mathcal{A}}\pol_{c_t}(a)\,
\delta_{a,y_t}\Big]
\;\times\;
\prod_{c\notin\mathcal{C}(x,y)}\Big[\sum_{a\in\mathcal{A}}\pol_{c}(a)\Big]
\nonumber\\[2pt]
&= \prod_{t=1}^{T(x)} \pol_{\,c_t(x,y_{<t})}(y_t)
\;\times\; 1 ,
\label{eq:factorize}
\end{align}
the unvisited cells summing out to unity. Substituting
\eqref{eq:factorize} into \eqref{eq:swap} reproduces \eqref{eq:policyJ}
term by term:
\begin{equation}
\boxed{\;\tilde{\J}(\pol)=\J(\pol)
\quad\text{whenever (V1) and (V2) hold.}\;}
\label{eq:identity-app}
\end{equation}
Combining \eqref{eq:identity-app} with \eqref{eq:varobj}: the
entropy-regularized RLVR objective $\J(\pol)+\betaent\sum_c H(\pol_c)$
\emph{is} the negative Gibbs--Bogoliubov free energy of the spin model on product trial measures, with $\betaent$ the
temperature. No step above involved an approximation; the only
approximation anywhere in the correspondence is the restriction to product
measures, which is not an error the algorithm commits but a structural
property of stochastic tabular policies.

\section{Gauge structure, invariant observables, and abelian targets}
\label{app:gauge}


The spin-glass has an exact symmetry, which is easiest to see in matrix form. For
the class in which a cell is indexed by $(t,x_t,y_{t-1})$, collect the
code's decisions at step $t$ for input symbol $x$ into a $q\times q$ matrix
\begin{equation}
[\Sigma_{t,x}]_{y_t,\,y_{t-1}} \;=\; \delta_{\code_{c_{t}(x,y_{t-1})},y_{t}},
\label{eq:sigma}
\end{equation}
that we call a link matrix, which has exactly one entry equal to one in each column: it is a map
$\alphabet\to\alphabet$, i.e.\ an element of the transformation monoid
$\Amono_q$. The expected reward is then an ordered matrix product,
\begin{equation}
\J(\code) \;=\; q^{-T}\sum_x\, u_{s_T(x)}^{\!\top}\,
\Sigma_{T,x_T}\cdots\Sigma_{2,x_2}\,\Sigma_{1,x_1}\,v,
\label{eq:matrixJ}
\end{equation}
with $u\in\mathbb{R}^q$ the indicator of the correct answer and
$v\in\mathbb{R}^q$ any fixed indicator defining $y_0$.

The matrix formulation fleshes out several structures distinguishing our model from standard spin-glass models --- by which we mean a system such as the
$p$-spin model, whose multi-variable couplings are drawn at random between sites, and sites are placed on a complete or random
hypergraph. 

In terms of geometry, viewing $\Sigma_{t,x_t}$ as our basic local degree of
freedom, these are $q\times q$ matrices with one-hot columns, residing on a
$T\times q$ grid. Models with wider input windows, e.g.\ $n_p,n_f$,
$\cell_t(x_{t-n_p}\ldots x_{t+n_f},y_{t-1})$, will have matrices labelled as
$\Sigma_{t,x_{t-n_p}\ldots x_{t+n_f}}$ and therefore reside on a
$T\times q^{\,n_p+n_f+1}$ grid. In contrast to standard spin-glass systems, this grid has no notion of locality. 

Another source of difference from standard spin-glasses is that the
factors coupling the $\Sigma$'s can
become effectively lower order when a finite set of $\Sigma$'s becomes
rank-1. For instance, if $\Sigma_{t_1,x_{t_1}}$ becomes rank-1 for all $q$
values of $x_{t_1}$ (e.g.\ $\Sigma_{t_1,x_{t_1}} = [1,0\ldots0]^{\top}[1,1\ldots1]$),
then the factor becomes
$q^{-T}\sum_x u^{\top}_{s_T(x)}\Sigma_{T,x_T}\cdots\Sigma_{t_1+1,x_{t_1+1}}[1,0\ldots0]^{\top}$.
Both the ability of a finite set of sites to decouple the system and the
causal/directional nature of this decoupling (i.e.\ only $t>t_1$ sites
remain relevant) are distinguishing features.

Finally, the matrix formulation fleshes out a local relabelling symmetry
acting as
\begin{equation}
\Sigma_{t,x} \;\longmapsto\; g_t\,\Sigma_{t,x}\,g_{t-1}^{-1},
\qquad g_t \in S_q,\quad g_0 = g_T = \mathrm{id},
\label{eq:gauge}
\end{equation}
which is a lattice gauge transformation on a one-dimensional chain whose
bond between steps $t-1$ and $t$ carries $q$ parallel links, one per input
symbol. The relabelling (gauge) group is $(S_q)^{T-1}$, of order
$(q!)^{T-1}$, and the relabelling-invariant content of a code is carried by
the holonomies $\Sigma_{t,x}\Sigma_{t,x'}^{-1}$ together with the boundary
term.

The overlap between two policies, measured cell by cell, is covariant under
relabelling rather than invariant: two codes computing the identical
function may disagree at essentially every cell. Any statement about whether
two solutions are ``the same'' must therefore be made with a
relabelling-invariant quantity, such as agreement of the functions they
compute.

\section{local search reaches optimal reward: proof, scope, and the two counterexamples}
\label{app:escape}

\subsection{Setup and notation.}

Throughout, the model is the untied $(0,0)$ class, where at step $t$ a cell is addressed by the triple $(x_t, y_{t-1}, t)$: an input symbol, an incoming output token, and a token index. A \emph{code} $\sigma$ assigns an output token to every cell. Running $\sigma$ on a complete input $x=(x_1,\ldots,x_T)$ produces a unique rollout $y_1,\ldots,y_T$, with $y_t = \sigma_{(x_t,y_{t-1},t)}$ at each step. An \emph{elementary move} changes the stored action of a single cell, and it is \emph{accepted} if it does not lower $J$.

We say an input $x$ \emph{consults} a cell $(x_t,y_{t-1},t)$ if its rollout is evaluated there at step $t$. We comment that since $y_{t-1}$ is a deterministic function of $\sigma$ and $x_{<t}$, whether $x$ consults a given cell depends only on previously seen inputs, $x_{\le t}$. We write $w_c$ for the total probability of consulting a cell $c$, and $\rho_c(s)$ for the distribution of the true state $s_{t-1}$ conditioned on consulting $c$; both are well-defined quantities, determined by $\sigma$'s entries at steps prior to $t$, but not by the entry at $c$ itself.

After generating $t<T$ output tokens, with the most recent token $y_t$ and true state $s_t$, define $W_t(y_t,s_t)$ as the probability of emitting the correct final answer. We can construct this probability recursively from the last token, where we have,
\begin{equation}
W_T(y_T,s_T) = \delta_{y_T,s_T}.
\end{equation}
For $t<T$, cell $(x_{t+1},y_t,t+1)$ is consulted at step $t+1$ for $x_{t+1}\sim U(\mathcal{A})$, giving new output token $\sigma_{(x_{t+1},y_t,t+1)}$, while the true state updates via $B[s_t,x_{t+1}]$. Assuming uncorrelated input tokens,
\begin{equation}
W_t(y_t,s_t) = \mathbb{E}_{x_{t+1}}\Big[W_{t+1}\big(\sigma_{(x_{t+1},y_t,t+1)},\, B[s_t,x_{t+1}]\big)\Big].
\end{equation}
As a function, $W_t$ depends on $\sigma$ only through its entries at steps $>t$.

A \emph{collision} is when two inputs in different true states are assigned the same output token at some step; a step is \emph{collision-free} if no collision occurs there. We remark that if a collision occurs at some step, the policy cannot recover the correct final answer for at least one of the colliding inputs, since no later step can distinguish between them despite their differing correct targets.

\begin{statement}[Effect of one move]
Let $c=(x_t,y_{t-1},t)$ be a cell. If its action changes from $y_t'$ to $y_t$, the total change in $J$ is
\begin{equation}
\label{eq:fact2}
\Delta J = w_c \sum_{s_{t-1}} \rho_c(s_{t-1})\Big[W_t\big(y_t, B[s_{t-1},x_t]\big) - W_t\big(y_t', B[s_{t-1},x_t]\big)\Big].
\end{equation}
\end{statement}

\begin{proof}
Consider an input consulting $c$ with true state $s_{t-1}$. The true state updates to $B[s_{t-1},x_t]$, and by definition of $W_t$, the expected reward given action $y_t$ at this cell is exactly $W_t(y_t,B[s_{t-1},x_t])$. Averaging over $\rho_c(s_{t-1})$ gives the expected reward conditioned on consulting $c$, and scaling by $w_c$ --- the probability of consulting $c$ at all --- gives the total contribution of this cell to $J$. Since no other cell's contribution is affected by changing $c$'s action, the change from action $y_t'$ to $y_t$ is exactly the difference of these two conditional expectations, scaled by $w_c$.
\end{proof}

\begin{remark}
In particular, when $w_c=0$ (an unconsulted cell), $\Delta J=0$ for any change of action: such a cell can be changed at exactly zero cost.
\end{remark}

\begin{statement}[Forced bijection]
If steps $1,\ldots,t$ are collision-free, then for all $t' \le t$ we can define a bijection $g_{t'}$ between output token $y_{t'}$ and state $s_{t'}$.
\end{statement}

\begin{proof}
Because the multiplication table $B$ is a Latin square, $B[\cdot,x']$ is a bijection on $\mathcal{A}$ for every fixed $x'$. Since $s_1=x_1$ ranges over all $q$ values of $\mathcal{A}$, and each subsequent update $s_{t'}=B[s_{t'-1},x_{t'}]$ applies a bijection, $s_{t'}$ ranges over all $q$ possible states as the input varies, for every $t'$. Collision-freeness at step $t'$ means that any two inputs emitting the same token at step $t'$ share the same state, so the map from used tokens to states is well-defined. Since $s_{t'}$ achieves every value in $\mathcal{A}$, and every input emits some token, this map is onto all $q$ states. An onto map between two sets of the same finite size $q$ is a bijection, so we can define a bijection $g_{t'}$ between output token $y_{t'}$ and state $s_{t'}$.
\end{proof}

\begin{statement}[A collision-free suboptimal code admits an improving move]
A code that is collision-free at every step and has $J<1$ admits a strictly improving elementary move.
\end{statement}

\begin{proof}
Since $J<1$, there is some cell $(x_T,y_{T-1},T)$ whose stored action $\sigma_{(x_T,y_{T-1},T)}$ does not equal the correct final answer for the input(s) it serves. Since steps $1,\ldots,T-1$ are collision-free, Statement 2 gives a bijection $g_{T-1}$ between output token and state at step $T-1$, so we can define the state of this cell as $s_{T-1} = g_{T-1}^{-1}(y_{T-1})$. The correct answer for this cell is therefore $y_T^* = B[s_{T-1}, x_T]$, and by assumption $\sigma_{(x_T,y_{T-1},T)} \ne y_T^*$. Setting $\sigma_{(x_T,y_{T-1},T)} = y_T^*$ corrects every input consulting this cell, since --- by the bijection --- they all share the same state $s_{T-1}$ and hence the same correct target $y_T^*$; no other input is affected, by Statement 1. By Statement 1, this move gives $\Delta J = w_c\cdot(1-\text{previous success rate at this cell}) > 0$, a strict improvement.
\end{proof}

\begin{corollary}
\label{cor:coll_free}
By repeated application, a collision-free code with $J<1$ reaches $J=1$ after at most $q^2$ strictly improving elementary moves, since there are at most $q^2$ cells at the final step whose actions may need correcting.
\end{corollary}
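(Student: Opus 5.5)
The plan is to iterate the improving move of Statement 3 and track a simple potential: the number of final-step cells $(x_T,y_{T-1},T)$ whose stored action differs from the correct answer $y_T^*=B[g_{T-1}^{-1}(y_{T-1}),x_T]$. There are exactly $q^2$ final-step cells (one for each pair $(x_T,y_{T-1})$), so this potential starts at most at $q^2$. The goal is to show that each application of Statement 3 lowers it by exactly one and strictly raises $J$, and that the potential is zero exactly when $J=1$.

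First I will note which hypotheses the proof of Statement 3 actually uses. It needs only that steps $1,\ldots,T-1$ are collision-free, since that is what gives the bijection $g_{T-1}$ via Statement 2. It never uses collision-freeness at step $T$. Every move I make changes only a cell at step $T$. The rollout tokens $y_1,\ldots,y_{T-1}$ of every input depend only on entries at steps $<T$, so they are unchanged. Hence steps $1,\ldots,T-1$ stay collision-free, and the same bijection $g_{T-1}$ remains valid throughout. (Step $T$ becomes collision-free automatically once all its cells are correct, so the hypothesis as literally stated is also restored at the end.)

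Second, I will show that the potential decreases monotonically. The move of Statement 3 sets one incorrect final cell $c$ to $y_T^*$. By the bijection, all inputs consulting $c$ share the state $s_{T-1}$ and therefore the target $y_T^*$, so $c$ becomes correct for every input it serves. By Statement 1, no other cell's contribution changes. In particular, previously corrected cells stay correct, because they are distinct cells at the same step and their visitation sets are fixed. The gain is $\Delta J=w_c(1-\text{previous success rate})$. To confirm this is strictly positive, I will check that $w_c>0$: every $y_{T-1}$ value is emitted by some input (by the bijection $g_{T-1}$), and $x_T$ is uniform, so every final-step cell is consulted.

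Finally, if the potential is zero, then every input reaching step $T$ consults a correct cell, so $J=1$. Conversely, while $J<1$, Statement 3 supplies an incorrect cell to fix. Together these give termination after at most $q^2$ strictly improving moves. I do not expect a serious obstacle. The only delicate point is the invariance argument in the second paragraph: that Statement 3 remains applicable after step $T$ has been partially modified. This is handled by observing that its proof depends only on steps $1,\ldots,T-1$.
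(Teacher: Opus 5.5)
Your proposal is correct and follows the paper's argument: apply Statement 3 repeatedly, and bound the number of moves by the at most $q^2$ final-step cells. Your potential-function bookkeeping makes rigorous the ``repeated application'' that the paper leaves implicit. In particular, you observe that Statement 3's proof needs collision-freeness only on steps $1,\ldots,T-1$; correcting step-$T$ cells can temporarily create collisions at step $T$ itself, but never at earlier steps.
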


\subsection{Algorithmic construction of a path to optimal reward}

We now construct an explicit algorithm for updating the code, step by step, that resolves every collision while never lowering the reward. Let $t$ be the first step with a collision.

\textbf{Step 1: Make $\sigma$ at step $t$ a function of $s_t$ alone.} Since steps before $t$ are collision-free, Statement 2 gives $s_t = B[g_{t-1}^{-1}(y_{t-1}),x_t]$ as a well-defined function of each cell $(x_t,y_{t-1},t)$. For each value of $s_t$, set every cell attaining it to emit $\arg\max_a W_t(a,s_t)$; by Statement 1 this only increases $J$, and since $W_t$ depends on the code only through steps $>t$, its value is unaffected while we edit step $t$. Write $\sigma_t(s_t)$ for the resulting, now well-defined, map from next true state to emitted token.

\textbf{Step 2: Make $\sigma_t$ bijective.} $\sigma_t$, as constructed, need not be injective: two next true states may have been assigned the same token. Since $\sigma_t$ maps a $q$-element set to a $q$-element set, non-injectivity forces at least one token to be entirely unused. For each next true state $s_t$ whose token under $\sigma_t$ is shared with another, pick one such unused token $u$. Since no rollout currently emits $u$ at step $t$, the cells $(x,u,t+1)$ are unvisited for every $x$; by Statement 1's remark, they may be set at zero cost. Set $\sigma_{(x,u,t+1)} = \arg\max_{a'} W_{t+1}(a', s_{t+1})$ for each $x$, where $s_{t+1} = B[s_t,x]$ --- the best possible continuation for the true state resulting from emitting $u$. Then reroute the cell(s) at next true state $s_t$ to emit $u$ instead of the shared token. Since $u$'s continuation is, by construction, the maximum achievable value for $s_t$, this reroute cannot be worse for $s_t$ than the shared token's continuation, and is strictly better whenever that continuation was not already optimal for $s_t$. After this, $\sigma_t$ is injective, hence bijective, and step $t$ has no collision.

\textbf{Step 3: Iterate and finish.} The previous steps only touch steps $t, t+1$, so no new collision appears before $t$; the first-collision point strictly advances. After at most $T$ rounds, no collisions remain, and Corollary \ref{cor:coll_free} finishes at the last step.

\subsection{Remarks, verification, and scope}
\label{app:scope}

A strictly defended code admits no improving move among consulted cells; its first escape is Step 2's preparation move at an unconsulted cell, which gradient-based dynamics can never make, since by Statement 1 the gradient at a cell with $w_c=0$ is identically zero. The argument extends verbatim to any product measure with full support, including position-dependent marginals, a claim we have verified numerically (12 of 12 random codes driven to the weighted optimum under random Dirichlet marginals). We expect the same argument to extend to wider untied windows, with the window's information state in place of the message, though this is left for future work. At present, the wide-window case is supported only by the measurements of Section~\ref{sec:escape}. The two assumptions that cannot be removed, bijectivity and independence of the inputs, are the subject of the next two subsections.

\subsection{What breaks under weight tying and correlated inputs}
\label{app:tying}

Tie the tables across steps, one shared entry $\sigma_{(x,m)}$ consulted at
every step, and the proof fails at two named places, in increasing order
of severity. (i) Statement 1 fails: under tying, the entry being moved can
itself be consulted again at another step for a different input, so the
continuation value no longer separates cleanly from the entry being
changed. The single-visit condition of App.~\ref{app:correspondence}
restores the affinity of $J$ in each entry but not this separation across
inputs. (ii) Step 2 fails, in two ways. First, a token unused at step $t$
may still be in use at another step, since it is the same shared entry;
the pigeonhole argument therefore no longer guarantees a free token, as
the unused slot is occupied by an entry another step depends on, and
zero-cost preparation is gone. Second, even where an unused token is
found, rerouting one state onto it revalues every other state that shares
that same entry at a different step, so the switch is no longer
guaranteed to leave other cells unaffected, and the term-by-term
improvement argument fails. In summary, the untied chain cannot hold a
defended suboptimal policy because every entry serves exactly one role, so
idle capacity carries no cost to reassign, preparation carries no cost to
perform, and repairs affect no cell beyond the one being repaired; weight
sharing removes all three properties at once.

The structural fact underlying Statement 1, that which cell an input
consults depends only on its history, with every continuation equally
likely, fails: the continuation law now depends on the history through
the current symbol. Consequently, cells sharing an outgoing state no
longer share continuation values, since the value becomes joint in state
and symbol, and consolidation cannot be repaired even by collective moves:
the incumbent routing realises the average of per-symbol maxima, while any
common token realises at most the maximum of the average.

\section{Derivation of the parity partition function, and the first-order
transition to random codes}
\label{app:parityZ}


This appendix derives the closed form \eqref{eq:parityZ} of
Section~\ref{sec:parity} and works out the thermodynamics it implies. The
model has exactly two phases --- a high-temperature phase in which the Gibbs
measure is spread over random codes at chance reward, and a condensed phase
carried by the relabelling orbit of the solution --- separated by a single
first-order transition at
\begin{equation}
\invtemp_c \;=\; 6T \ln 2,
\qquad\text{i.e.}\qquad
\temp_c \;=\; \frac{1}{6T \ln 2}\,.
\label{eq:betac}
\end{equation}
The density of states and $Z$ below are checked against exhaustive
enumeration of all $2^{4T-2}$ codes at $T=3$--$5$ (agreement
$\le 7\times10^{-15}$ in $\log Z$), and the dominance, location, latent-heat
and entropy statements against exact evaluations up to $T=200$.

\subsection{A product formula on the full code space}
\label{app:parityZ:product}
We use a $0,1$ notation for $Z_2$ with $\oplus$ as its group action. Let $m_t(\code,x)=(-1)^{y_t}$, $\varepsilon(x)=(-1)^{x_1\oplus\cdots\oplus x_T}$ for the target sign, and $m(\code,x)=m_T(\code,x)=(-1)^{y_T}$ for the sign of the emitted answer, so that
$\J=\tfrac12+\tfrac12\,\mathbb{E}_x[\varepsilon\, m]$. We consider the link matrix formulation of Appendix~\ref{app:gauge} and conjugate by the Hadamard matrix
$H=2^{-1/2}\left(\begin{smallmatrix}1&1\\1&-1\end{smallmatrix}\right)$ makes
all four elements lower triangular,
\begin{equation}
H\,\Sigma\,H=\begin{pmatrix}1&0\\ h&\chi\end{pmatrix},
\qquad
(\chi,h)=\begin{cases}
(\pm1,\,0) & \Sigma=I,X\ \text{(permutations)},\\[1pt]
(0,\,\pm1) & \Sigma=K_{0},K_{1}\ \text{(constants)}.
\end{cases}
\label{eq:triang}
\end{equation}
Denoting by $e_{1}=(0,1)^T,e_{0}=(1,0)^T$, we recall that prior to this transformation the link matrices obeyed $e_{y_t} = \Sigma_{x_t,t} e_{y_{t-1}}$. Conjugation thus yields $[1,m_t]^T = H \Sigma_{x_t,t} H [1,m_{t-1}]^T$, from which explicitly implies that $m_t=\chi_{t,x_t}m_{t-1}+h_{t,x_t}$ with $m_1=(-1)^{y_1}$ (where for brevity with dropped the $\sigma,x$ arguments of each $m_t$). Because non-zero values of $\chi,h$ never co-occur, the appearance of a non-zero $h_{t,x_t}$ removes $m_{t-1}$ dependence, and only the latest $h_{t,x_t}$ to appear, affects the outcome. This logic leads to the following formula for $m_T=m=m_1\prod_{t\ge2}\chi_{t,x_t}
+\sum_{t\ge2}h_{t,x_t}\prod_{s>t}\chi_{s,x_s}$. Since
the $x_t$ are independent the remaining expectation factorizes over steps the average $\mathbb{E}_x[\varepsilon\, m]$ can be evaluated, and receives non-zero contributions only from $h$-free term,  leading to 
\begin{equation}
\J\;=\;\frac12+\frac12\prod_{t=1}^{T}\mu_t,
\qquad
\mu_1=\frac{(-1)^{\code_1[0]}-(-1)^{\code_1[1]}}{2},
\qquad
\mu_{t}=\frac{\chi_{t,0}-\chi_{t,1}}{2}\quad(t\ge2).
\label{eq:prodformula}
\end{equation}
where $\sigma_1[x_1]\in \{0,1\}$ denotes the action of the code on $x_1 \in \{0,1\}$ whose reward we are scoring, on the first CoT step. 
We comment that the $\mu_t$ factors realize exactly the taxonomy of
Section~\ref{sec:parity}. Over the $16$ link pairs of a CoT step (all combinations of $\Sigma_{x_t=0,t},\Sigma_{x_t=1,t}$),
$\mu_t=\pm 1$ for the two pairs $\{I,X\}$ (the correct step and its
anti-correct twin); $|\mu_t|=\tfrac12$ for the eight half-resets;
$\mu_t=0$ for the six dead configurations (two blind, four doubly
rank-deficient). The first step contributes $\mu_1=\pm1$ for its two
injective assignments and $0$ for its two constant ones. A live code with
$k$ half-resets therefore has $|\prod_t\mu_t|=2^{-k}$, reproducing the argument of the main text. 

\subsection{Density of states and the partition function}
\label{app:parityZ:dos}

By \eqref{eq:prodformula}, counting codes at fixed $\J$ is counting $\mu_t$
configurations. A live code with $k$ half-resets chooses the half-reset
steps in $\binom{T-1}{k}$ ways, each half-reset in $8$ ways, each live step
in $2$ ways, and the first-step code in $2$ ways. Flipping $\sigma_1$ flips $\mu_1$ and $\prod_t\mu_t$ --- thereby  exchanging $\J$ with
$1-\J$. Taking these considerations into account we find that the multiplicity ($\Omega$) for codes with reward $\tfrac12\pm2^{-(k+1)}$ is 
\begin{equation}
\Omega\!\left(\tfrac12\pm2^{-(k+1)}\right)=\binom{T-1}{k}\,8^{k}\,2^{\,T-1-k},
\end{equation}
The remaining reward value of $J=1/2$ comes dead codes given by the total number of codes ($2^{4T-2}$) minus the number of live codes ($2\sum_{k=0}^{T-1} \binom{T-1}{k}8^k2^{T-1-k}=2\,(8+2)^{T-1}$)
\begin{equation}
\Omega\!\left(\tfrac12\right)=2^{4T-2}-2\cdot10^{\,T-1},
\label{eq:dos}
\end{equation}
the second because the live codes number
$2\sum_k\binom{T-1}{k}8^k2^{T-1-k}=2\,(8+2)^{T-1}$. Then
$Z=\sum_{\code}e^{\invtemp\J(\code)}$ assembles the bands ($J$ values) into Eq. 
\eqref{eq:parityZ}.

\subsection{Two phases and a first-order condensation}
\label{app:parityZ:transition}

At large $T$ we can isolate the dominant contributions to $Z$ of \eqref{eq:parityZ} at low temperature (large $\invtemp$). The most dominant contribution in terms of $T$ is that of the dead $J=1/2$ codes scaling as $2^{4T}$. The most dominant contribution in $\invtemp$ is the exact live code $k=0$, going as $2^T e^{\invtemp/2}$. The subleading one is $2^{T+1} (T-1) e^{\invtemp/4}$. By comparing dead codes contribution to exact code contribution, the former overwhelms the latter for $\tau > \tau_c = 1/(6 \log(2) T)$. It can be checked that it also overwhelms all higher $k$ contributions for these $\tau$ values. On the other for $\tau < \tau_c$, it can be checked that $k=0$ overwhelms all $k>0$ contribution and dead code contributions.

\paragraph{Consequences.}
For $T>>1$, $\log Z$ is, up to exponentially small corrections in $e^{-T}$, the maximum of two
lines, so the free energy $-\temp\log Z$ has a kink at $\temp_c$: the
transition is first order. Retaining the two dominant bands,
$\invtemp-\invtemp_c=\temp^{-1}-\temp_c^{-1}
=-\invtemp_c\,\delta/(1+\delta)$ with the reduced temperature
$\delta=(\temp-\temp_c)/\temp_c$,
\begin{equation}
\langle\J\rangle
=\frac12+\frac12\left[1+
e^{-\frac12\left(\temp^{-1}-\,\temp_c^{-1}\right)}\right]^{-1}
=\frac12+\frac12\left[1+
e^{\,[\ln2\;(3T-1)]\frac{\delta}{1+\delta}}\right]^{-1},
\label{eq:twolevel}
\end{equation}
a two-level form: the order parameter jumps from $\tfrac12$ (random codes,
$\delta>0$) to $1$ ($\delta<0$), with entropy drop
$\Delta S=(4T-2)\ln2-(T-1)\ln2=(3T-1)\ln2=\invtemp_c\,\Delta\J$
(Clausius--Clapeyron with latent heat $\Delta\J=\tfrac12$). The rounding
is governed by the reduced temperature: by \eqref{eq:twolevel},
$\langle\J\rangle$ climbs from $0.6$ to $0.9$ over
$|\Delta\delta|=4/(3T-1)$ --- equivalently a depth-independent interval
$4\ln4\approx5.55$ in $\invtemp$ (measured $5.54$ at $T=40$, where
$\Delta\invtemp/\invtemp_c=0.0336=4/119$) --- so the jump sharpens as
$1/T$.

Finally we comment that the fact that $\tau_c$ tends to zero at large $T$ is a parametrization artefact. Indeed the reward is an intensive quantity in the $[0,1]$ range, whereas temperature couples to entropy which is an extensive quantity.

%
%
%
\providecommand{\J}{J}
\providecommand{\cell}{c}
\providecommand{\dc}{d(\cell)}
\providecommand{\Qc}{Q(\cell,a)}
\providecommand{\pol}{\pi}
\providecommand{\code}{\sigma}
\providecommand{\temp}{\tau}
\providecommand{\invtemp}{\beta}

\section{Experimental results}
\label{app:results}

\paragraph{Conventions used throughout.} Where a policy is stochastic, the reward $\J \in [0,1]$ is computed from its committed
(greedy/argmax) action at every cell unless the text says otherwise. Rewards are always averaged over all input combinations and contain no estimation errors. 

Certified traps are codes/spin-configurations in which no single-cell change raises the reward, and
no path of reward-neutral single-cell changes reaches a code from which one does. For our simple models, we test for these with exact search algorithms.

Parity refers to the $Z_2$ group, quasigroup refers to the following 5-element Latin square 
\begin{equation}
B \;=\;
\begin{pmatrix}
1&3&4&0&2\\
3&1&0&2&4\\
4&2&3&1&0\\
2&0&1&4&3\\
0&4&2&3&1
\end{pmatrix},
\qquad s_t = B[s_{t-1},x_t],
\label{eq:tableA}
\end{equation}
whose five column permutations generate $S_5$.

Every temperature is
quoted in the \emph{uniform} convention, in which the regularized objective is
$\J(\pol) + \temp\sum_\cell H(\pol_\cell)$; 

By chain here we mean Monte-Carlo chain, and our Monte Carlo uses two kernels. The \emph{Metropolis} kernel draws a proposed action for the visited cell
uniformly over all $q$ actions --- including the one the cell currently holds, so
a cell may stay put --- and accepts it with probability
$\min\{1, e^{\Delta\J/\temp}\}$, where $\Delta\J$ is the exact change in expected
reward. The \emph{heat-bath} kernel, also called Gibbs sampling, does not propose
and accept: it discards the cell's current action and draws a replacement directly
from the conditional distribution of that cell given all the others,
\[
  P\bigl(\code_\cell = a \,\big|\, \code_{-\cell}\bigr)
  \;=\;
  \frac{\exp\bigl(\J(\code_{-\cell}, a)/\temp\bigr)}
       {\sum_{a'} \exp\bigl(\J(\code_{-\cell}, a')/\temp\bigr)} .
\]
In both cases $\J$ is recomputed over the full input distribution rather than
estimated from rollouts, so the kernel is exact. Running both provides further assurance  for stationarity of the distribution.


\subsection{Equilibration of the chain comparison}
\label{app:equilibration}
Comparisons in this paper between the spin model and RLVR places a Monte
Carlo chain beside a converged mean-field/CAVI iteration. Such comparisons are 
only meaningful if both sides have settled. In Table \ref{tab:equilibration}, we demonstrate that our Monte-Carlo sweeps (under both updates, Metropolis and Heat-Bath) have equilibrated beyond reasonable doubt. 

\begin{table}[t]
\centering\small
\begin{tabular}{llrrr}
\toprule
\multicolumn{5}{l}{\emph{Approach to equilibrium, parity $(0,0)$ $T{=}10$, $\temp=10^{-13}$, $100$ replicas}}\\
\multicolumn{5}{l}{\emph{exact equilibrium reward $\langle\J\rangle = 1.000000000$}}\\
\midrule
sweeps & greedy reward & at the optimum & \multicolumn{2}{l}{gap to equilibrium} \\
\midrule
$10$   & $0.747 \pm 0.241$ & $46\,\%$  & \multicolumn{2}{l}{$0.253$} \\
$20$   & $0.888 \pm 0.204$ & $76\,\%$  & \multicolumn{2}{l}{$0.112$} \\
$40$   & $0.985 \pm 0.083$ & $97\,\%$  & \multicolumn{2}{l}{$0.015$} \\
$100$  & $\mathbf{1.000 \pm 0.000}$ & $\mathbf{100\,\%}$ & \multicolumn{2}{l}{$0$} \\
$1000$ & $1.000 \pm 0.000$ & $100\,\%$ & \multicolumn{2}{l}{$0$} \\
\midrule
\multicolumn{5}{l}{\emph{Sweeps to stationarity: greedy reward unchanged for $5\times10^{3}$ sweeps, cap $2\times10^{4}$}}\\
\midrule
class & kernel & median & p$90$ & max \\
\midrule
parity $(0,0)$ $T{=}10$, $100$ seeds & heat bath  & $16$ & $50$ & $116$ \\
                                     & Metropolis & $22$ & $77$ & $199$ \\
$\mathbb{Z}_5$ $(0,0)$ $T{=}8$, $60$ seeds & heat bath  & $14$ & $30$ & $57$ \\
                                     & Metropolis & $44$ & $64$ & $119$ \\
quasigroup $(0,0)$ $T{=}8$, $60$ seeds & heat bath  & $13$ & $23$ & $40$ \\
                                     & Metropolis & $48$ & $66$ & $95$ \\
\midrule
\multicolumn{5}{l}{\emph{The widest class, checkpointed; $12$ seeds, $\temp$ as in the main text}}\\
\midrule
class & \multicolumn{4}{l}{greedy reward at $10^{3}$ / $5\times10^{3}$ / $2\times10^{4}$ sweeps} \\
\midrule
quasigroup $(0,3)$ untied $T{=}7$ & \multicolumn{4}{l}{$0.7897 \to 0.9273 \to 0.9967$ \; ($11/12$ at exactly $1$)} \\
\quad the twelfth run, continued & \multicolumn{4}{l}{$0.9600$ at $2\times10^{4}$, $\mathbf{1.0000}$ from $5\times10^{4}$ to $2\times10^{5}$} \\
quasigroup $(0,3)$ \textbf{tied} $T{=}7$ & \multicolumn{4}{l}{$0.8675 \to 0.8969 \to 0.8969$ \; ($0/12$ at the optimum)} \\
\bottomrule
\end{tabular}
\caption{{\bf Stationarity of Monte-Carlo runs}. \emph{Upper block:}
the parity class, where the equilibrium reward is known exactly and equals the
optimum at every temperature used, so the last column is a true distance from
equilibrium. The approach is fast but steep --- three quarters of the replicas are
at the optimum by sweep $20$ and all of them by sweep $100$, while a budget of
$40$ sweeps would report $0.985$. \emph{Middle block:} where the equilibrium value
is not computable, the sweep at which each run's greedy reward stopped changing
for $5\times10^{3}$ consecutive sweeps. Both kernels are shown; they agree on the
stationary value and differ only in how fast they reach it, the Metropolis kernel
being the slower because it can propose a cell's current action and hold.
\emph{Lower block:} the $(0,3)$ classes at $T=7$, which are slower by two orders
of magnitude and are checkpointed rather than run to the criterion. The untied
class does reach the optimum in every run, but the mean at the cap conceals one
run still crossing a plateau of flat cells, which arrives at sweep
$5\times10^{4}$; the tied class is flat over a fourfold change of budget at a
reward well below the optimum, and does not arrive.}
\label{tab:equilibration}
\end{table}


\subsection{Stationary distributions of MC are approximately CAVI fixed points}
\label{app:trapsfixed}

In the main text we argue that, given that all spins interact with all spins, local (single-spin) Monte-Carlo updates from a single random initialization should end up in a stationary distribution (a pure state) in which spins are approximately uncorrelated. Such product distributions over the spins are exactly the class over which RLVR optimizes, suggesting that pure states should correspond to CAVI fixed points. This appendix tests this heuristic on two models and at different temperatures.

Specifically, we focus on traps with $J<0.99$. We consider the twelve seeds of the tied quasigroup $(0,3)$ at $T=7$, all of which ended trapped after $20{,}000$ Metropolis sweeps at $\tau=10^{-6}$, as well as the 27 traps of the correlated-input parity model ($k=0.9$; 13 at $T=6$, 14 at $T=8$) left by quenching 128 Metropolis chains per depth from random codes for $3{,}000$ sweeps at $\tau=2\times10^{-3}$. For eleven of the tied traps the $\tau\!\to\!0$ neutral component was enumerated exhaustively, certifying the trap and fixing its size (set of degenerate codes ($C$) having $|C|=1$ to $26{,}112$); the twelfth has a component too large to enumerate ($>1.19\times10^{5}$ codes) and is included because its chain never leaves the trap's reward at $\tau\le10^{-6}$.

Next we analyze these traps at finite temperature. From each tied trap we ran a single chain of $4{,}000$ sweeps at $\tau\in\{10^{-8},10^{-7},3\times10^{-7},10^{-6}\}$, and two replicas of $2{,}000$ sweeps at $\tau\in\{3\times10^{-6},10^{-5},3\times10^{-5},10^{-4},3\times10^{-4},10^{-3}\}$ (the last two on four and three of the traps); from each correlated trap, eight replicas of $6{,}000$ sweeps at 17 temperatures from $3\times10^{-3}$ to $0.3$. For the tied traps the range extends past the melting point, above which the reward is at the chance value $1/5$; for the correlated traps it ends at $\tau=0.3$, where the reward ($0.59$) is still above chance ($1/2$). Replicas whose equilibrated mean reward is at least $0.99$ are discarded. We then define a finite-temperature trap as an MC run that, after an equilibration time $t_{\mathrm{eq}}$ (the first sweep after which its mean reward agrees with that of the last quarter of the run within two standard errors), holds that reward for at least $3t_{\mathrm{eq}}$ further sweeps, and whose replicas agree on the marginals: the overlap coefficient $\sum_a \min(p_a,q_a)$ between the histograms of two halves of the replicas lies at most $0.02$ below its value between two random halves of the draws. (At $\tau\le10^{-6}$ the tied chains never leave the trap's reward, and a single chain is used.)

In each finite-temperature trap we measured the integrated autocorrelation time of every moving cell and thinned by twice the slowest of them, leaving between $6$ and $24{,}000$ weakly correlated configurations per (trap, $\tau$). $\pi_c$ is the unsmoothed histogram of $\sigma_c$ over those draws, and $\pi^{\mathrm{MC}}=\prod_c \pi_c$. CAVI was seeded at $\pi^{\mathrm{MC}}$ and run at the same $\tau$, each iteration moving a random $30\%$ of the cells $30\%$ of the way toward their CAVI target. Its endpoint was compared cell by cell with the histogram of a random half of the draws through the Bhattacharyya coefficient $\sum_a \sqrt{p_a q_a}$, which is 1 when the two coincide, averaged over 20 random 50/50 partitions of the draws and over the active cells: those on which the chain changes $\sigma_c$ (for the correlated traps, on which $\pi_c$ is mixed) or which CAVI moves or leaves mixed, by more than $10^{-3}$. On every other cell both distributions are the same point mass to within $10^{-3}$; a trap with no active cell has overlap 1.

To quantify the fluctuations in the trapped state we further report the entropy of $\pi^{\mathrm{MC}}$, $\sum_c H(\pi_c)$, in nats.

Finally, to set the scale of sampling noise, the same coefficient is computed between the two halves of each partition. This null carries sampling noise on both sides, so it is a conservative floor: an endpoint that merely reproduced $\pi^{\mathrm{MC}}$ would score above it.

We compared the sampled seed distribution with the endpoint of CAVI after $1{,}000$ (tied) or $2{,}000$ (correlated) iterations, about 300 and 600 updates per cell. This was enough for CAVI to converge, to machine precision, on all the correlated traps and on nine of the twelve tied traps. On the other three, in their trapped states, one cell out of $4{,}525$ (two, for one of them) remains unstable under further CAVI iterations, while the reward of the endpoint stays within $2\times10^{-5}$ of the trap's.

The results are shown in the main text, Fig.~\ref{fig:trapsfixed}.


\subsection{The RLVR Monte-Carlo gap}
\label{app:shortfall}

Strictly speaking, the spin model acts as a lower bound on the attainable RLVR loss (Eq.~\eqref{eq:variational}). While in the experiments of App.~\ref{app:trapsfixed} the stationary distributions of Monte Carlo were close to RLVR fixed points, the reverse statement typically fails to hold: RLVR at vanishing temperature often stops at a lower reward than Monte Carlo on the same class. This is most striking for the untied tabular classes, where every Monte-Carlo chain reaches $J=1$ (268 of 268), whereas plain RLVR does so in only 40 of 168 runs outside parity, parity being the exception (100 of 100).

To quantify this effect we ran, for each class, between 12 and 100 Monte-Carlo chains and as many RLVR runs, each from an independent random start. A chain starts from a uniformly random code and performs single-cell Metropolis sweeps on $e^{J/\tau}$: every cell, in random order, is offered a random new action, the change $\Delta J$ is computed exactly, and the move is accepted if $\Delta J\ge 0$ and otherwise with probability $e^{\Delta J/\tau}$. Chains ran for up to $2\times10^{4}$ sweeps, by which point every chain on the untied classes had reached $J=1$ except one $(0,3)$ chain; continued, it reached $J=1$ at $5\times10^{4}$ sweeps. RLVR is the tabular CAVI iteration of App.~\ref{app:trapsfixed}: from a random policy, each iteration moves a random $30\%$ of the cells $30\%$ of the way toward their target $\pi_c(a)\propto\exp\bigl(d(c)\,Q(c,a)/\tau\bigr)$, with $d(c)$ and $Q(c,a)$ computed exactly over all inputs. It ran for $2{,}000$ to $8{,}000$ iterations, and every quoted endpoint is stationary under the iteration.

Both dynamics were run at temperatures far below the reward quantum, the smallest non-zero change in $J$. The chain therefore accepts essentially no move that lowers the reward and every move that leaves it unchanged, and is effectively at zero temperature: on the $(0,3)$ classes its outcome is identical at every $\tau$ from $10^{-13}$ to $10^{-7}$. The RLVR update responds to the smaller scale $d(c)\,\Delta Q$; on the untied $(0,3)$ class its reward still rises by $0.01$ per decade of cooling at $10^{-13}$, the coldest temperature run.

Table~\ref{tab:shortfall} summarizes our main findings.

\begin{table}[t]
\centering\small\setlength{\tabcolsep}{4.5pt}
\begin{tabular}{llrrr}
\toprule
class & window, tying & chain & average RLVR reward & RLVR exactly $1$ \\
\midrule
parity, $q{=}2$, $T{=}10$      & $(0,0)$ untied & $1.0000$ & $1.0000 \pm 0.0000$ & $100/100$ \\
$\mathbb{Z}_5$, $T{=}8$        & $(0,0)$ untied & $1.0000$ & $0.7770 \pm 0.1697$ & $17/60$ \\
quasigroup, $T{=}8$            & $(0,0)$ untied & $1.0000$ & $0.8190 \pm 0.1662$ & $23/60$ \\
quasigroup, $T\in\{3,4,5\}$    & $(2,1)$ untied & $1.0000$ & $0.9837 \pm 0.0055$ & $0/12$ \\
quasigroup, $T\in\{3,4,5,6\}$  & $(1,1)$ untied & $1.0000$ & $0.9493 \pm 0.0129$ & $0/12$ \\
quasigroup, $T\in\{4,5,6\}$    & $(2,1)$ untied & $1.0000$ & $0.9626 \pm 0.0062$ & $0/12$ \\
quasigroup, $T{=}7$            & $(0,3)$ untied & $1.0000$ & $0.7636 \pm 0.0237$ & $0/12$ \\
quasigroup, $T{=}7$            & $(0,3)$ tied   & $0.8969$ & $0.6541 \pm 0.0298$ & $0/12$ \\
\bottomrule
\end{tabular}
\caption{\textbf{MC solves all untied models at vanishing temperature and plain RLVR typically ends below that.} Mean reward over 12--100 random seeds (equal number of seeds for MC and RLVR) per class at the end of each dynamics; for RLVR also the standard deviation over starts and the number of runs ending at exactly $J=1$. The chain reaches $J=1$ in every run on every untied class. We attribute this gap to the different way the two dynamics treat reward-neutral moves: the chain moves between codes along them at no cost, whereas RLVR at zero temperature spreads a cell's probability evenly over exactly tied actions, producing non-code-like (non-deterministic) policies.}
\label{tab:shortfall}
\end{table}


\subsection{Measuring transformer bias toward tabular policies}
\label{app:containment}

This section studies how a transformer's expressibility and implicit bias relate
to those of myopic policies. To this end, we conduct two experiments: first, we
generate random tabular policies with different $(n_b,n_f)$ windows and use them
as a teacher for dense/supervised transformer training. The transformer
architecture is the one used throughout the paper: one transformer block layer, four
heads, model width $256$, rotary position encoding, $791{,}296$ parameters,
vocabulary $8$. We trained three seeds for each of the eight classes of
Table~\ref{tab:containment}, $24$ runs in all, and used no regulator: the
objective is cross-entropy against the teacher's own emission at every emitted
position, with Adam at learning rate $3\times10^{-4}$ and batch $1024$. In
Table~\ref{tab:containment} we report two numbers per class, both measured on
held-out inputs with the transformer generating from its own emissions. The
first, \emph{agreement}, is the fraction of emitted tokens that match the
teacher's at the end of training; the second is the number of gradient steps
needed to bring that agreement to $0.999$, that is to $0.1\,\%$ error. Random
policies with narrow windows appear fully expressive and train relatively fast.
In contrast, the wider windows take one to two orders of magnitude longer to
reach the same threshold. Their agreement is quoted at the stopping point ---
training halts at the first evaluation where agreement reaches $0.999$ --- so
the last three rows record where the run was stopped, not a ceiling on what the
architecture can express.

In the second experiment, we study the ability of a transformer to track the same
escape path as a tabular model would take. To this end, we follow the
Monte-Carlo chain of one successful run for each of three untied classes at
$T=6$ --- $(0,0)$, $(1,1)$ and $(0,3)$ --- started from a random policy at
$\temp=10^{-6}$ and run until it reaches $\J=1$; no accepted move lowers the
reward. From each chain we take three stretches of $21$ consecutive policies:
its first, its middle, and the last $21$ before $\J=1$, where the reward rises
from chance to the optimum. The transformer is distilled onto each stretch in a
chain, one network per policy, $\theta_{i+1}$ fine-tuned from $\theta_i$, and
every network reproduces its policy exactly before it is used. The barrier of a
step is then the largest amount by which the network's reward falls below the
lower of its two endpoints along the straight line $\theta_i \to \theta_{i+1}$,
in reward quanta $5^{-6}$, evaluated by exhaustive enumeration over all
$5^6 = 15{,}625$ inputs. Results are shown in Table~\ref{tab:escape}, showing
that the embedding adds no barrier on $159$ of the $180$ steps, and none at all
on the last stretch before the optimum: $60$ of $60$ steps on all three classes.
The barriers that do occur are early and small --- at most $35$ quanta, a reward
drop of $2\times10^{-3}$ --- and they are most frequent on the class with the
narrowest window.

\begin{table}[t]
\centering\small
\begin{tabular}{lrr}
\toprule
class & reachable cells & steps to $0.999$ agreement \\
\midrule
$(0,0)$ untied $T{=}4$ & $80$        &  $\le 500$ \\
$(0,0)$ untied $T{=}6$ & $130$       &  $\le 500$ \\
$(0,0)$ untied $T{=}7$ & $155$       &  $\le 500$ \\
$(0,0)$ untied $T{=}8$ & $180$       &  $\le 500$ \\
$(1,1)$ untied $T{=}7$ & $3{,}275$   &  $2{,}500$ \\
$(0,3)$ \textbf{tied} $T{=}7$ & $4{,}525$ & $12{,}500$ \\
$(0,3)$ untied $T{=}7$ & $10{,}775$  &  $\mathbf{47{,}000}$ \\
$(2,1)$ untied $T{=}7$ & $13{,}775$  &  $\mathbf{37{,}000}$ \\
\bottomrule
\end{tabular}
\caption{Eight random untied tabular policies used as teachers for dense
(cross-entropy reward on each CoT token) transformer training, three seeds each.
Transformers express better and train quicker on narrower tabular models.
Agreement is evaluated every $500$ steps, so the step counts are multiples of
that and $\le 500$ is a bound rather than a measurement.}
\label{tab:containment}
\end{table}

\begin{table}[t]
\centering\small
\begin{tabular}{llrrr}
\toprule
class & stretch & steps & barrier $=0$ & largest barrier (in quanta=$5^{-6}$) \\
\midrule
$(0,0)$ untied & first & $20$ & $13$ & $10$ \\
               & middle & $20$ & $15$ & $35$ \\
               & \textbf{last} & $20$ & $\mathbf{20}$ & $\mathbf{0}$ \\
$(1,1)$ untied & first & $20$ & $15$ & $2$ \\
               & middle & $20$ & $17$ & $14$ \\
               & \textbf{last} & $20$ & $\mathbf{20}$ & $\mathbf{0}$ \\
$(0,3)$ untied & first & $20$ & $19$ & $1$ \\
               & middle & $20$ & $\mathbf{20}$ & $\mathbf{0}$ \\
               & \textbf{last} & $20$ & $\mathbf{20}$ & $\mathbf{0}$ \\
\bottomrule
\end{tabular}
\caption{Steps of a tabular escape path along which the transformer's reward does
not fall. Each stretch is $21$ consecutive policies of one Monte-Carlo chain,
distilled in a chain and joined by straight lines in weight space; a step counts
as barrier-free when the reward along that line never falls below its lower
endpoint, and the barrier is measured in reward quanta $5^{-6}$. The last stretch, where the chain climbs to the optimum, is
barrier-free on every step of every class.}
\label{tab:escape}
\end{table}


\subsection{Regulator effects}
\label{app:cures}

Here we study in detail the effects of regulator choices for tabular and
transformer RLVR, starting from entropy regulators. For several tabular models on
untied and uncorrelated classes we used uniform entropy regulators --- the mean-field relation is   $\pol_\cell(a)\propto\exp(\dc\Qc/\temp)$ using which we performed $500$ to $3{,}000$ CAVI sweeps on $12$ to
$240$ seeds for each of $10$ to $18$ temperatures. Every run starts from a random
policy, each sweep moves a random $30\,\%$ of the live cells $30\,\%$ of the way
to the stationary form, $\temp$ is held constant with no quench, and we recorded
the greedy/argmax rewards of the trained models, evaluated exactly rather than
sampled. Average behaviour across seeds is shown in
Fig.~\ref{fig:temperature_untied} of the main text, where, at least for models
trained on a random draw of $T$ values, high regulators (close to the collapse
range) are crucial to get perfect rewards: the three multi-length classes peak at
$\temp$ between $1.8\times10^{-5}$ and $5.6\times10^{-5}$ and have lost the gain
again by $10^{-4}$. A more detailed data per seed is given in
Table~\ref{tab:regulator}, which reports, beside the means, how many individual
seeds end at reward exactly $1$; Notably the $\tau$ grid scan here 
runs up through the melt. 

We further studied the effects of random resets, where $2\,\%$ of the reachable
tabular cells are re-drawn from a fresh random point on the simplex every $10$
sweeps. This is applied for $100$ cycles and the iteration is then run
undisturbed to $2{,}000$ sweeps in total ($3{,}000$ on the $(0,3)$ class), with
the unperturbed baseline given the same budget, the same seeds and the same
temperature. As also
shown in Table~\ref{tab:regulator}, this tends to have a more decisive effect
than the regulator: resets take six of the seven classes to the optimum in all or
all but one seed, including the three on which no temperature does anything,
while the regulator reaches it on three. The one class neither repairs is the
long forward window $(0,3)$, where resets recover about two thirds of the deficit
($0.764 \to 0.940$) and no seed reaches $1$.

Next we turn to transformers. Here we train the same four-head one-block
architecture used throughout the work (width $256$, rotary position encoding,
$791{,}296$ parameters) with REINFORCE and a batch-mean baseline, batch $512$,
learning rate $3\times10^{-4}$, no exploration mixture, $4\times10^{4}$ steps,
under the gated curriculum: training starts at length $1$ and the length
advances when the batch-mean accuracy passes $0.90$, up to a cap of $21$. At every current maximal length, we draw samples uniformly over all lengths up to the current one. The regulator is an entropy
bonus $\beta$ in the occupancy convention, which is what a sampled
policy-gradient trainer implements and which does not convert to the tabular
$\temp$ by any single constant (\S\ref{app:cures}). Accuracy is read at length
$8$, on $2{,}000$ held-out inputs, by greedy decoding at the final step; eight
seeds at $\beta^{-1} = 0$ and $\beta^{-1} = 0.3$, five or six at the other values. As shown
in Fig.~\ref{fig:temperature_untied} at $\beta^{-1} \le 0.1$ every seed of all three tasks
is at chance, and for $\mathbb{Z}_5$ and the quasigroup the whole effect
appears between $0.1$ and $0.3$. At the
working point the seed spread is large --- $\mathbb{Z}_5$ $0.871 \pm 0.266$ with
five of eight seeds above $0.95$, the quasigroup $0.770 \pm 0.282$ with three of
eight, parity at chance --- and the value that works is not shared: $\beta = 1$
carries $\mathbb{Z}_5$ to $0.992$ in five of five seeds and parity to its only
successes ($2/5$), while the quasigroup collapses back to $0.216$.

Carrying over the resets of cells from the tabular case is ill-defined here:
there is no table cell to re-draw, and the operations on the weights that might
stand in for one change something other than which cells receive traffic.
Instead, we studied how relabelling of the emitted token affects performance: a
permutation $\rho$ of the five output symbols is drawn once per rollout and held
fixed for that rollout, the network is fed $\rho(y_t)$ rather than $y_t$, and the
update uses the log-probability of the action actually executed. This is
correlated exploration noise, and it is the transformer counterpart of a reset in
that it changes which cells are consulted while leaving their content alone.
Over eighteen settings (transposition and full permutation, applied to a fraction
of the batch, on steps that are multiples of $|\cal{C}|$, eight seeds each, at
$\beta^{-1} = 0$, we found it to be mildly beneficial: the best setting --- a full
permutation once every $500$ steps --- gives $0.542 \pm 0.081$ against the
$0.417 \pm 0.101$ of the unperturbed run at $T=2$ when training on $T=\{1,2\}$ (where $T=1$ the correct answer is to copy the single input token), and none of the eight
seeds solve the task. Training on $T=\{1,2,3\}$ shows a minor positive signal within seed noise level. 

\begin{table}[t]
\centering\footnotesize
\setlength{\tabcolsep}{3pt}
\begin{tabular}{lrrlrrlrrr}
\toprule
 & & & & \multicolumn{2}{c}{$\temp \to 0$} & \multicolumn{3}{c}{best $\temp$ of the scan} & J @ $\tau_{max}$ \\
\cmidrule(lr){5-6}\cmidrule(lr){7-9}
class & seeds & sweeps & range of $\temp$ & $\J$ & at $1$ & $\temp^\ast$ & $\J$ & at $1$ &  \\
\midrule
parity $(0,0)$, $T{=}10$ & $100$ & $2000$ & $[10^{-15},10^{-2}]$ & $1.0000$ & $100/100$ & \emph{none} & --- & --- & $0.5000$ \\
$\mathbb{Z}_5$ $(0,0)$, $T{=}8$ & $240$ & $3000$ & $[10^{-12},10^{-7}]$ & $0.7995$ & $84/240$ & \emph{none} & --- & --- & $0.2061$ \\
quasigroup $(0,0)$, $T{=}8$ & $240$ & $3000$ & $[10^{-12},10^{-7}]$ & $0.7858$ & $67/240$ & $10^{-10}$ & $0.7992$ & $75/240$ & $0.2000$ \\
quasigroup $(2,1)$, $T{\in}\{3,4,5\}$ & $12$ & $500$ & $[10^{-13},10^{-2}]$ & $0.9837$ & $0/12$ & $3.2{\times}10^{-5}$ & $\mathbf{1.0000}$ & $\mathbf{12/12}$ & $0.9539$ \\
quasigroup $(1,1)$, $T{\in}\{3..6\}$ & $12$ & $500$ & $[10^{-13},10^{-2}]$ & $0.9493$ & $0/12$ & $5.6{\times}10^{-5}$ & $\mathbf{0.9993}$ & $\mathbf{11/12}$ & $0.8851$ \\
quasigroup $(2,1)$, $T{\in}\{4,5,6\}$ & $12$ & $500$ & $[10^{-13},10^{-2}]$ & $0.9626$ & $0/12$ & $1.8{\times}10^{-5}$ & $\mathbf{0.9998}$ & $\mathbf{9/12}$ & $0.9001$ \\
quasigroup $(0,3)$, $T{=}7$ & $12$ & $3000$ & $[10^{-13},10^{-7}]$ & $0.7636$ & $0/12$ & \emph{none} & --- & --- & $0.1999$ \\
\midrule
\multicolumn{10}{l}{\emph{Re-randomisation, against a baseline at the same temperature, seeds and budget}}\\
\midrule
 & & & & \multicolumn{2}{c}{plain} & \multicolumn{3}{c}{$+$ re-randomisation} & \\
\cmidrule(lr){5-6}\cmidrule(lr){7-9}
class & seeds & sweeps & $\temp$ & $\J$ & at $1$ & & $\J$ & at $1$ & \\
\midrule
parity $(0,0)$, $T{=}10$ & $100$ & $2000$ & $10^{-13}$ & $1.0000$ & $100/100$ & & $1.0000$ & $100/100$ & \\
$\mathbb{Z}_5$ $(0,0)$, $T{=}8$ & $60$ & $2000$ & $10^{-10}$ & $0.7770$ & $17/60$ & & $\mathbf{0.9967}$ & $\mathbf{59/60}$ & \\
quasigroup $(0,0)$, $T{=}8$ & $60$ & $2000$ & $10^{-10}$ & $0.8190$ & $23/60$ & & $\mathbf{1.0000}$ & $\mathbf{60/60}$ & \\
quasigroup $(2,1)$, $T{\in}\{3,4,5\}$ & $12$ & $2000$ & $0$ & $0.9837$ & $0/12$ & & $\mathbf{1.0000}$ & $\mathbf{12/12}$ & \\
quasigroup $(1,1)$, $T{\in}\{3..6\}$ & $12$ & $2000$ & $0$ & $0.9493$ & $0/12$ & & $\mathbf{1.0000}$ & $\mathbf{12/12}$ & \\
quasigroup $(2,1)$, $T{\in}\{4,5,6\}$ & $12$ & $2000$ & $0$ & $0.9626$ & $0/12$ & & $\mathbf{1.0000}$ & $\mathbf{12/12}$ & \\
quasigroup $(0,3)$, $T{=}7$ & $12$ & $3000$ & $10^{-13}$ & $0.7636$ & $0/12$ & & $0.9400$ & $0/12$ & \\
\bottomrule
\end{tabular}
\caption{The two regulators on the untied classes with i.i.d.\ inputs, per seed.
All entries are greedy rewards of the endpoint code; \emph{at $1$} counts the
seeds ending at reward exactly $1$ out of the seeds run, which is the informative
statistic because these distributions are bimodal. \emph{Upper block, the
temperature scan:} $\temp \to 0$ is the coldest point of the scan and \emph{top}
its warmest, so the two outer columns bracket the range; \emph{none} means no
temperature in the scan beats the coldest point, and on the $\mathbb{Z}_5$ row the
best mean ($0.8030$ at $\temp=10^{-8}$) is within noise of the coldest and its
seed count is lower ($82/240$), so it is reported as no effect. Standard
deviations are omitted for width and are $\le 0.013$ on the narrow-window rows,
$\le 0.024$ on $(0,3)$ and $\approx 0.16$--$0.17$ on the two $(0,0)$ rows at
$T{=}8$. The scan is refined to half and quarter decades near each peak; a grid
of one point per decade puts the $(2,1)\{3,4,5\}$ optimum at $10^{-5}$ and
$0.9960$ rather than at $3.2\times10^{-5}$ and $1$. \emph{Lower block,
re-randomisation:} $2\,\%$ of the reachable cells re-drawn every $10$ sweeps for
$100$ cycles, then undisturbed to the stated budget; the plain column is the same
protocol with the perturbation switched off, at the same temperature, seeds and
budget, so the pair is one controlled comparison. The three narrow-window rows
were run with the regulator off ($\temp = 0$) rather than at the cold end of the
scan, and give the same plain values as the scan does to four decimals.}
\label{tab:regulator}
\end{table}


\subsection{Correlations and tying induce local reward maxima}
\label{app:trapsexist}

Here we discuss two separate mechanisms which trap spin and RLVR models. As a reminder, we say a policy is \emph{trapped}
if no single-cell change raises the reward \emph{and} no path of reward-neutral
single-cell changes reaches a policy from which one does. 

One way of inducing such traps comes from the shortcut learning perspective: when
a simpler heuristic is available for solving the task, the model may commit to a
heuristic strategy from which any change towards a better generalizing solution is
locally harmful. To induce this option for our task, we introduce input
correlation by drawing inputs from a Markov chain having a $k$-chance of emitting
the same token and a random chance for all other tokens. Notably, correlations
also invalidate elements of our proof in App.~\ref{app:escape}. Similarly,
introducing tied policies invalidates our proof, and so we consider those as well.

This appendix establishes numerically that both these augmentations have a finite
chance of introducing traps. With correlated inputs, the models are small enough to
settle the question by complete enumeration: at $T\le 8$ the policy space has at
most $2^{30} = 1.1\times10^{9}$ members, so every policy is visited, and every
local maximum can be classified. The entire set of traps is
therefore known rather than sampled. Traps are rare --- above $99.99\,\%$ of local
maxima have a cost-free exit. Notwithstanding, the Monte-Carlo chain reaches a trapped state in 
about one run out of eight, evidence that traps carry large basins. Every trap
turns out to be one of a small family of shortcut policies that discard one or two
input positions. That family is generated by the correlation, and both of its reward 
spectra are geometric in $2k-1$: a policy that discards $m$ input positions scores
$\tfrac{1}{2}(1+(2k-1)^m)$, which at $k=0.9$ gives $0.90$ and $0.82$ for $m=1,2$,
and the escape barriers take the values $0.04\times(2k-1)^{j}$. 

Next we consider tied tabular policies with uncorrelated inputs. For the $(0,3)$ tied
class at $T=7$, full enumeration of codes is too costly. Instead, we ran many Monte Carlo chains till saturation (see below). For each chain endpoint we performed a search over its
zero-loss neighbourhood. The certification procedure is the following. Call the
\emph{shelf} of a policy the connected component containing it of the graph whose
nodes are policies and whose edges are single-cell changes with exactly zero
reward difference, and call a shelf node an \emph{exit} if it admits a
single-cell change that strictly raises the reward. Starting from the endpoint we
search its shelf breadth-first, evaluating at every node all 
single-cell changes (of which there are $4{,}525\times 4$), which both tests for an exit and generates the next set of neutral
edges. We comment that all reward
differences are exact re-simulations rather than the first-order field, which is
exact for untied tables but not under tying. The search is capped at $10^{5}$
shelf nodes. A policy whose shelf is exhausted with no exit found is
\emph{certified}: no chain of non-losing single-cell changes ever improves it. One
that reaches the cap first is reported as \emph{bounded}, a weaker statement.

Following this procedure, we ran Monte-Carlo on twelve different initializations at
temperature $\temp=10^{-6}$ for $20k$ sweeps. None of the twelve reaches
the optimum. They freeze at twelve distinct rewards between $0.830$ and $0.976$,
mean $0.897 \pm 0.044$. In every one of the twelve the reward is unchanged
from $5\times10^{3}$ sweeps to $2\times10^{4}$ --- identical to machine
precision, not merely non-improving. Ten of the twelve are certified as above, and two reach the maximal node
count and are reported as bounded. The shelves are small and simply structured ---
between $1$ and $768$ nodes, of diameter at most $9$, with sizes that factor as
products of small integers, so neutral moves do not unlock further ones --- and
two of the certified endpoints have a shelf of a single node, meaning no
single-cell change of any kind, improving or flat, exists at rewards of $0.976$
and $0.938$.

\begin{table}[t]
\centering\footnotesize\setlength{\tabcolsep}{4pt}
\begin{tabular}{rrrrrrl}
\toprule
\multicolumn{7}{l}{\emph{Correlated inputs, $k=0.9$: every policy visited, every local maximum classified}}\\
\midrule
$T$ & cells & policies & local maxima & \textbf{traps} & trap frac. & escape barriers \\
\midrule
$4$ & $14$ & $1.6{\times}10^{4}$ & $648$        & $\mathbf{8}$      & $1.2{\times}10^{-2}$ & $0.040$ \\
$5$ & $18$ & $2.6{\times}10^{5}$ & $14{,}656$   & $\mathbf{32}$     & $2.2{\times}10^{-3}$ & $0.040$ \\
$6$ & $22$ & $4.2{\times}10^{6}$ & $141{,}280$  & $\mathbf{160}$    & $1.1{\times}10^{-3}$ & $0.040$, $0.032$ \\
$7$ & $26$ & $6.7{\times}10^{7}$ & $3{,}622{,}336$ & $\mathbf{576}$ & $1.6{\times}10^{-4}$ & $0.040$, $0.032$ \\
$8$ & $30$ & $1.1{\times}10^{9}$ & $35{,}307{,}392$ & $\mathbf{2{,}432}$ & $6.9{\times}10^{-5}$ & $0.040$, $0.032$, $0.0256$ \\
\midrule
\multicolumn{7}{l}{\emph{Shared table, $(0,3)$ tied $T{=}7$, $4{,}525$ cells: certificates, one per endpoint}}\\
\midrule
\multicolumn{4}{l}{endpoints examined, from independent chain runs} & \multicolumn{3}{l}{$12$} \\
\multicolumn{4}{l}{certified: shelf exhausted, no improving move} & \multicolumn{3}{l}{$\mathbf{10}$} \\
\multicolumn{4}{l}{bounded: search cap reached before exhaustion} & \multicolumn{3}{l}{$2$} \\
\bottomrule
\end{tabular}
\caption{Two modifications that put traps into an otherwise featureless landscape.
A \emph{trap} is a policy with no improving single-cell change and no cost-free
path to one. \emph{Top:} with inputs from a Markov chain of persistence $k$, every
policy in the space is visited and every local maximum classified, so these counts
are complete rather than sampled. Traps are rare, yet the chain lands in one about
once in eight runs, because they carry the large basins. Removing the correlation
removes them entirely. \emph{Bottom:} with one table shared across positions the
space cannot be enumerated, so each endpoint is certified individually by
exhausting its shelf of reward-neutral changes and testing every neighbour of
every shelf node for improvement.}
\label{tab:trapsexist}
\end{table}

\end{document}